\newtheorem{remark}{Remark}
\newtheorem{theorem}{Theorem}
\DeclareMathOperator{\expit}{expit}
\DeclareMathOperator{\logit}{logit}
\DeclareMathOperator{\var}{Var}
\DeclareMathOperator{\tmlee}{tmle}
\DeclareMathOperator{\aipww}{aipw}
\DeclareMathOperator{\opt}{opt}
\DeclareMathOperator{\infun}{IF}
\DeclareMathOperator{\dtmlee}{dr}
\DeclareMathOperator{\cfdtmlee}{cfdr}
\DeclareMathOperator{\R}{\mathbb{R}}
\newcommand{\indep}{\mbox{$\perp\!\!\!\perp$}}
\newcommand{\dd}{\mathrm{d}}
\newcommand{\Pn}{\mathbb{P}_{n}}
\newcommand{\hopt}{\hat l_{\opt}}
\newcommand{\aipw}{\hat\theta_{\aipww}}
\newcommand{\cfdtmle}{\hat\theta_{\cfdtmlee}}
\newcommand{\dtmle}{\hat\theta_{\dtmlee}}
\newcommand{\tmle}{\hat\theta_{\tmlee}}
\newcommand{\hsigma}{\hat\sigma_{\dtmlee}}
\newcommand{\one}{\mathds{1}}
\newcommand{\midd}{\,\bigg|\,}
{
  \theoremstyle{definition}
  \newtheorem{assumption}{}
}
{
  \theoremstyle{definition}
  \newtheorem{assumptioniden}{}
}
\newtheorem{lemma}{Lemma}
\title{Statistical Inference for Data-adaptive Doubly Robust
  Estimators with Survival Outcomes}
\date{\today}
\author[1]{Iv\'an D\'iaz \thanks{corresponding author:
    ild2005@med.cornell.edu}}
\affil[1]{\small Division of Biostatistics,
  Weill Cornell Medicine.}
\begin{document}
\maketitle

\begin{abstract}
  The consistency of doubly robust estimators relies on consistent
  estimation of at least one of two nuisance regression parameters.
  In moderate to large dimensions, the use of flexible data-adaptive
  regression estimators may aid in achieving this
  consistency. However, $n^{1/2}$-consistency of doubly robust
  estimators is not guaranteed if one of the nuisance estimators is
  inconsistent. In this paper we present a doubly robust estimator for
  survival analysis with the novel property that it converges to a
  Gaussian variable at $n^{1/2}$-rate for a large class of
  data-adaptive estimators of the nuisance parameters, under the only
  assumption that at least one of them is consistently estimated at a
  $n^{1/4}$-rate. This result is achieved through adaptation of recent
  ideas in semiparametric inference, which amount to: (i)
  Gaussianizing (i.e., making asymptotically linear) a drift term that
  arises in the asymptotic analysis of the doubly robust estimator,
  and (ii) using cross-fitting to avoid entropy conditions on the
  nuisance estimators. We present the formula of the asymptotic
  variance of the estimator, which allows computation of doubly robust
  confidence intervals and p-values. We illustrate the finite-sample
  properties of the estimator in simulation studies, and demonstrate
  its use in a phase III clinical trial for estimating the effect of a
  novel therapy for the treatment of HER2 positive breast cancer.
\end{abstract}

\maketitle

\maketitle
\section{Introduction}
\label{sec1}

Doubly robust estimation is a widely used method for the estimation of
causal effects and the analysis of missing outcome data. In survival
analysis, doubly robust estimation
\cite{Diaz2018,Moore11,Parast2014,Zhang2014,Cole2004, Xie2005,
  Rotnitzky2005} proceeds by estimating two nuisance parameters: (i)
the probability of treatment as a function of covariates and the
probability of censoring conditional on covariates (henceforth
referred to as treatment-censoring mechanism), and (ii) the
probability of an outcome conditional on covariates (henceforth
referred to as the outcome mechanism). The asymptotic properties of
the doubly robust estimator such as consistency and asymptotic
distribution thus depends on the large sample behavior of functionals
of these nuisance estimators. In low dimensional problems with
categorical covariates, nuisance parameter estimation may be carried
out using the nonparametric maximum likelihood estimator (NPMLE), and
the Delta method yields asymptotic normality of the effect
estimates. In moderate to high dimensions or with continuous
covariates, the curse of dimensionality precludes the use of the
NPMLE, making it necessary to use smoothing techniques. When
(semi)-parametric smoothing methods are used (e.g., the Cox
proportional hazards model), a simple application of the Delta method
yields $n^{1/2}$-consistency and asymptotic normality of doubly robust
estimators, provided that at least one nuisance model is correctly
specified. An influence function based approach or the bootstrap may
be used to obtain asymptotically valid estimates of the variance,
confidence intervals, and p-values. Under a moderate- to
high-dimensional regime, the functional forms posed by many
(semi)-parametric models are hardly supported by a-priori scientific
knowledge, thus yielding inconsistent nuisance and doubly robust
estimators. Data-adaptive regression methods have recently been
adopted in the missing data and causal inference literature to tackle
the problem of model misspecification \cite{vanderLaanPetersenJoffe05,
  WangBembomvanderLaan06,ridgeway2007,Bembometal08a,
  lee2010improving,van2014entering,neugebauer2016case,belloni2014,
  belloni2017program, farrell2015robust}.  Techniques such as
classification and regression trees, adaptive splines, neural
networks, $\ell_1$ regularization, support vector machines, boosting
and ensembles, etc. offer a flexibility in functional form
specification that is not available for traditional approaches such as
the Cox proportional hazards model.  However, under inconsistency of
one nuisance estimator, the large sample analysis of the resulting
data-adaptive doubly robust estimators requires empirical process
conditions which are often not verifiable. This means that the finite
sample and asymptotic distribution of cannot be established, and
standard methods for computing confidence intervals and perform
hypothesis testing (such as the bootstrap and influence function based
approaches) cannot be guaranteed to be yield correct results.

We develop a doubly robust estimator of the exposure-specific survival
curve under informative missingness and a non-randomly assigned
exposure. Our estimator is asymptotically normal under the only
assumption that the at least one of the nuisance parameters is
estimated consistently at $n^{1/4}$-rate. Our asymptotic analysis of
the estimator avoids two empirical process conditions often made in
the analysis data-adaptive doubly robust estimators: the Donsker
condition (a condition on the entropy of the model) and the condition
that a drift term defined as a functional of the nuisance estimators
is asymptotically linear and therefore asymptotically Gaussian. To our
knowledge, this is the first paper concerned with Gaussianization of
this drift term, and with providing doubly-robust asymptotic
distributions in the context of survival analysis.

Our work builds on the general framework of targeted learning
\cite{vanderLaanRose11}, and is closely related to the methods in
\cite{van2014targeted,benkeser2016doubly,diaz2017doubly}. These papers
present a series of doubly robust estimators of the mean of an outcome
from incomplete data in the setting of a cross-sectional study. We
show that these ideas are generalizable to estimation with survival
outcomes subject to informative right-censoring
censoring. Generalizing previous results to more complex data
structures has proven non-trivial, because the techniques used involve
alternative representations of the doubly robust estimating equation,
and are thus specific to the estimator and data structure
considered. Our work provides insights and building blocks that are
necessary to generalize the methods to even more complex data
structures and doubly robust estimators. To remove the Donsker
condition, which may limit the class of data-adaptive estimators
allowed, we leverage previous work on cross-fitting, originally
proposed in the context of cross-validated targeted minimum loss-based
estimation (TMLE) by \cite{zheng2011cross}, and later applied to the
estimating equation approach by \cite{chernozhukov2016double}. Our
main contribution is to present a method to Gaussianize the drift
term, a problem which cross-fitting does not solve.

Related to our methods,
\cite{belloni2014,belloni2017program,farrell2015robust} study doubly
robust estimators for cross-sectional studies in high-dimensional
settings ($p>>n$) under the assumption that the functional form of the
nuisance parameters (outcome regression and treatment mechanism) can
be approximated by a generalized linear model on a known
transformation of the covariates. They show that lasso-type methods
can be used to obtain estimators that are uniformly asymptotically
normal under consistent estimation of both the outcome regression and
the treatment mechanism. The work of
\cite{avagyan2017honest,dukes2018high} extends these methods to obtain
inference and tests that remain valid under misspecification of at
most one of the nuisance models. The methods of
\cite{van2014targeted,benkeser2016doubly,diaz2017doubly} and this
paper also achieve asymptotic normality under consistent estimation of
only one of the working nuisance parameters. Unlike
\cite{belloni2014,belloni2017program,farrell2015robust,avagyan2017honest,dukes2018high},
we do not restrict our models to the class of generalized linear
functions of the covariates, allowing for general data-adaptive
methods such as those based on regression trees, adaptive splines,
neural networks, etc.

The article is organized as follows. In Section~\ref{sec:notation} we
introduce the notation and inference problem. In Section~\ref{sec:dr}
we present existing doubly-robust estimators such as the TMLE and
augmented inverse probability weighted (AIPW), and discuss their
asymptotic properties, focusing on the definition of the ``drift
term'' generated by inconsistent estimation of the nuisance
parameters. In Section~\ref{sec:drift} we present an alternative
representation of the drift term, a result that is fundamental to the
construction of the repaired TMLE in Section~\ref{sec:tmle}. We
conclude with a numerical study and an illustrative application in
sections~\ref{sec:simula} and \ref{sec:motiva}, as well as a brief
discussion in Section~\ref{sec:discuss}.

\section{Notation}\label{sec:notation}
Let $T$ denote a discrete time-to-event outcome taking values on
$\{1,\dots,K\}$. Let $C \in \{0,\dots,K\}$ denote the censoring time
defined as the time at which the participant is last observed in the
study. Let $A\in\{0,1\}$ denote study arm assignment, and let $W$
denote a vector of baseline variables which will be used to adjust for
the confounding in treatment assignment and for informative censoring.
The observed data vector for each participant is
$O=(W,A,\Delta, \tilde T)$, where $\tilde T=\min(C,T)$, and
$\Delta = \one\{T\leq C\}$ is the indicator that the participant's
event time is observed (uncensored). Here $\one(X)$ is the indicator
variable taking value $1$ if $X$ is true and $0$ otherwise.

Equivalently, we encode a participant's data vector $O$ using the
following longitudinal data structure:
\begin{equation}
  O=(W, A, R_0, L_1,  R_1, L_2\ldots, R_{K-1}, L_K),\label{O}
\end{equation}
where $R_t = \one\{\tilde T = t, \Delta=0\}$ and
$L_t= \one\{\tilde T = t, \Delta=1\}$, for $t\in\{0,\ldots,K\}$. The
sequence $R_0, L_1, R_1, L_2\ldots, R_{K-1}, L_K$ in the above display
consists of all $0$'s until the first time that either the event is
observed or censoring occurs. In the former case $L_t=1$; otherwise
$R_t=1$. For a random variable $X$, we denote its history through time
$t$ as $\bar X_t=(X_0,\ldots,X_t)$. For a given scalar $x$, the
expression $\bar X_t=x$ denotes element-wise equality. Define the
following indicator variables for each $t$:
\[I_t=\one\{\bar R_{t-1}=0, \bar L_{t-1}=0\}, \qquad J_t=\one\{\bar
  R_{t-1}=0, \bar L_t=0\}.\] The variable $I_t$ is the indicator based
on the data through time $t-1$ that a participant is at risk of the
event being observed at time $t$. Analogously, $J_t$ is the indicator
based on the outcome data through time $t$ and censoring data before
time $t$ that a participant is at risk of censoring at time $t$. By
convention we let $J_0=1$. We assume that $O\sim P_0$, where $P_0$ is
a distribution in the non-parametric model defined as all
distributions dominated by some measure $\nu$. We assume we observe an
i.i.d. sample $O_1,\ldots,O_n$ from $P_0$, and denote $\Pn$ its
distribution function. For a function $f:O\mapsto\R$, we use the
notation $Pf=\int f\dd P$.

Define the potential outcome $T_1$ as the event time that would have
been observed had study arm assignment $A=1$ and censoring time $C=K$
been externally set with probability one. For a given time point $\tau$,
we define the counterfactual survival curve under treatment arm $A=1$
as
\[\theta_0 = P_0(T_1 > \tau).\]
We focus on estimating the survival probability for treatment arm
$A=1$, estimation of $P_0(T_0 > \tau)$, where $T_0$ as the event time
that would have been observed had study arm assignment $A=0$, may be
obtained by symmetric arguments.

Define the conditional hazard function for survival at time $t$:
\begin{equation}
  h(t,w)=P_0(L_t=1|I_t = 1, A=1, W=w),\nonumber
\end{equation}
among the population at risk at time $t$ within strata of study arm
and baseline variables. Similarly, for the censoring variable $C$,
define the censoring hazard at time $t \in \{0, \dots, K \}$:
\begin{equation}
  g_{R,0}(t,w)=P_0(R_t=1|J_t=1, A=1, W=w).\nonumber
\end{equation}
We use the notation $g_{A,0}(w)=P_0(A=1|W=w)$ and
$g_0(t,w)=(g_{A,0}(w), g_{R,0}(t,w))$.  Let $p_{W}$ denote the
marginal distribution of the baseline variables $W$.  We have added
the subscript $0$ to $p_W,g,h$ to denote the corresponding quantities
under $P_0$, and will use $p_W,g,h$ without a subscript to refer to
generic quantities associated to any $P$ in the non-parametric
model. Likewise, we use $E_0$ to denote expectation under
$P_0$. Denote the survival function for $T$ at time
$t \in \{1,\dots, \tau-1\}$ conditioned on study arm $A=1$ and
baseline variables $w$ by
\begin{equation}
  S_0(t,w)=P_0(T>t|A=1,W=w).\label{S_def}
\end{equation}
Similarly, define the following function of the censoring distribution:
\begin{equation}
  G_0(t,w)=P_0(C\geq t| A=1,W=w). \label{G_def} \end{equation}
Define the following assumptions, which are standard in the analysis
of survival data under right censoring:
\begin{assumptioniden}[Consistency]\label{ass:cons}
  $T=T_1$ in the event $A=1$;
\end{assumptioniden}
\begin{assumptioniden}[Treatment assignment randomization]\label{ass:random}
  $A$ is independent of $T_1$ conditional on $W$;
\end{assumptioniden}
\begin{assumptioniden}[Random censoring]\label{ass:randomcens}
  $C$ is independent of $T_1$ conditional on $(A=1,W)$;
\end{assumptioniden}
\begin{assumptioniden}[Positivity]\label{ass:positivity}
  $P_0\{g_{A,0}(W) > \epsilon\} = P_0\{g_{R,0}(t,W) < 1 - \epsilon\} =
  1$ for some $\epsilon>0$, for each $t\in\{0,\ldots,\tau-1 \}$
\end{assumptioniden}
We make assumptions \ref{ass:cons}-\ref{ass:positivity} throughout the
manuscript.  Assumption \ref{ass:cons} connects the potential outcomes
to the observed outcome. Assumption \ref{ass:random} holds by design
in a randomized trial. Assumption \ref{ass:randomcens}, which is
similar to that in \cite{rubin1987multiple}, holds if censoring is
random within strata of treatment and baseline variables (which we
abbreviate as ``random censoring").  Assumption \ref{ass:positivity}
states that each treatment arm has a positive probability, and that
every time point has a hazard of censoring smaller than one, within
each baseline variable stratum $W=w$ with positive density under
$P_0$.

Under assumptions \ref{ass:cons}-\ref{ass:positivity}, we have
$T \indep C | A,W$ and therefore $S_0(t,w)$ and $G_0(t,w)$ have the
following product formula representations:
\begin{align}
  S_0(t,w)&=\prod_{m=1}^t \{1-h_0(m,w)\}; \qquad
            G_0(t,w)=\prod_{m=0}^{t-1} \{1-g_{R,0}(m,w)\},
          \label{defS}
\end{align}
which leads to the following identification result:
\begin{equation*}
\theta_0 = E_0\left[S_0(\tau, W)\right] =
  E_0\left[\prod_{m=1}^{\tau} \{1-h_0(m,W)\}\right].\label{def:theta}
\end{equation*}
We sometimes use the notation $\theta(P)$ to refer to the above
parameter evaluated an any arbitrary distribution $P$ of $O$ in the
non-parametric model.

For an estimator $\hat\theta$ of $\theta_0$, we refer to
\textit{$n^{1/2}$-consistency} as the property that
$n^{1/2}(\hat\theta-\theta_0)$ is bounded in probability. We describe
the estimator as \textit{asymptotic linear} if it admits the
representation
$n^{1/2}(\hat\theta-\theta_0) = \frac{1}{\sqrt{n}}\sum_i
D(O_i)+o_P(1)$ for some function $D$. Asymptotically linear estimators
are also referred to as \textit{asymptotically normal} since the
central limit theorem yields
$n^{1/2}(\hat\theta-\theta_0)\rightsquigarrow N[0,\var\{D(O)\}]$. For
an estimator $\hat f(t,o)$ of a parameter $f_0(t,o)$, and the
$L_2(P_0)$ norm $||f||^2 = \sum_t\int f(t,o)^2\dd P_0(o)$, we refer to
$n^{1/4}$-consistency as the property that
$n^{1/4}||\hat f-f_0||=o_P(1)$. For a collection of functions
$(f_1,\ldots,f_k)$, the notation $||(f_1,\ldots,f_k)||$ is used to
denote the vector of element-wise norms.

\section{Doubly robust consistency vs doubly robust inference}\label{sec:dr}

We start by presenting the efficient influence function for estimation
of $\theta_0$ in model the non-parametric model:
\begin{equation}
  D_{\eta,\theta}(O)=-\sum_{t=1}^\tau\frac{\one(A=1)I_t}{g_A(W)G(t,W)}
  \frac{S(\tau,W)}{S(t,W)}\{L_t-h(t,W)\}  + S(\tau,W) - \theta,\label{eq:defD}
\end{equation}
where we denote $\eta=(g, h)$ and $g=(g_A,g_R)$. This efficient
influence function is a fundamental object for optimal estimation of
$\theta_0$ in the non-parametric model. First, for given estimators
$\hat h$ and $\hat g$, an estimator that solves for $\theta$ in the
estimating equation $\Pn D_{\hat \eta,\theta}=0$ is consistent if at
least one of $h_0$ or $g_0$ is estimated consistently. Second,
$\var\{D_{\eta_0,\theta_0}(O)\}$ is the efficiency bound for
estimation of $\theta_0$ in the model $\mathcal M$. Specifically,
under consistent estimation of $m_0$ and $g_0$ at a fast enough rate
(which we define below), an estimator that solves
$\Pn D_{\hat \eta,\theta}=0$ has variance smaller or equal to that of
any regular, asymptotically linear estimator of $\theta_0$ in the
non-parametric model.

The estimator constructed by directly solving for $\theta$ in (the
linear equation) $\Pn D_{\hat \eta,\theta}=0$ is often referred to as
the augmented IPW estimator, and we denote it by $\aipw$. The
augmented IPW is sometimes problematic because directly solving the
estimating equation can yield an estimate out of bounds of the
parameter space \cite{Gruber2010t}. Alternatives to repair the AIPW in
cross-sectional analyses have been discussed by \cite{Kang2007,
  Robins2007, tan2010bounded}. In this paper we work under the
targeted minimum loss based estimation (TMLE) framework of
\cite{vanderLaanRubin06, vanderLaanRose11}, which provides a general
method to construct estimators that stay within the parameter
space. In general, the TMLE of $\theta_0$ is defined as a substitution
estimator $\tmle=\theta(\tilde P)$, where $\tilde P$ is an estimate of
$P_0$ constructed such that the corresponding $\tilde \eta$ and
$\theta(\tilde P)$ solve the estimating equation
$\sum_{i=1}^n D_{\tilde \eta, \theta(\tilde
  P)}(O_i)=o_P(n^{1/2})$. The estimator $\tilde P$ is constructed by
tilting an initial estimate $\hat P$ towards a solution of the
relevant estimating equation, by means of an empirical risk minimizer
in a parametric submodel. The interested reader is referred to
\cite{Diaz2018,Moore11} for more details on the construction of a TMLE
for survival analysis. The preliminary estimator $\hat P$, or the
component $\hat \eta$ necessary to evaluate $\theta(\hat P)$, may be
obtained based on data-adaptive regression methods. In this article we
do not pursue the development of estimators of $\eta_0$, but rather
rely on estimators available in the literature. In particular, we
advocate the use of stacked regression or learning ensembles, which
poses desirable oracle guarantees \cite{vanderLaanPolleyHubbard07}.

The analysis of the properties of the $\tmle$ estimator relies on (i)
the fact that it solves the efficient influence function estimating
equation, and (ii) the consistency and smoothness of the initial
estimator $\hat \eta$. In particular, define the following conditions:
\begin{assumption}[Doubly robust consistency of $\hat \eta$]\label{ass:DR1}
  Let $||\cdot||$ denote the $L_2(P_0)$ norm defined in the notation
  section. Assume $||\hat g_A - g_{A,1}||=o_P(1)$,
  $||\hat g_R - g_{R,1}||=o_P(1)$, and $||\hat h - h_1||=o_P(1)$,
  where either $(g_{A,1}, g_{R,1}) = (g_{A,0}, g_{R,0})$, or
  $h_1=h_0$.
\end{assumption}
\begin{assumption}[Donsker]\label{ass:donsker}
  Assume the class of functions
  $\{(g_A,g_R,h):||g_A-g_{A,1}||<\delta,||g_R-g_{R,1}||<\delta,||h-h_1||<\delta\}$
  is Donsker for some $\delta > 0$.
\end{assumption}
Under conditions \ref{ass:DR1} and \ref{ass:donsker}, an application
of Theorems 5.9 and 5.31 of \cite{vanderVaart98} (see also example
2.10.10 in \cite{vanderVaartWellner96}) yields
\begin{equation}
  \tmle-\theta_0= \beta(\hat \eta) +
  (\Pn - P_0)D_{\eta_1, \theta_0} + o_P\big(n^{-1/2} + |\beta(\hat \eta)|\big),\label{eq:wh}
\end{equation}
where $\beta(\hat \eta) = P_0 D_{\hat \eta, \theta_0}$. The term
$(\Pn - P_0)D_{\eta_1, \theta_0}$ is an empirical average of mean zero
i.i.d random variables, and thus converges to a normal random variable
at $n^{1/2}$-rate. Under \ref{ass:DR1}, $\beta(\hat \eta)$ converges
to zero in probability so that $\tmle$ is consistent. However,
$n^{1/2}$-consistency of $\tmle$ requires the stronger condition that
$\beta(\hat \eta) = O_P(n^{-1/2})$. This can only be proved in general
if \emph{both} $(g_{A,1},g_{R,1})=(g_{A,0},g_{R,0})$ and $h_1=h_0$, in
which case the stronger condition $\beta(\hat \eta) = o_P(n^{-1/2})$
holds. If $\hat \eta$ is estimated within a parametric model, the
delta method yields asymptotic linearity of $\beta(\hat \eta)$, which
in turn yields asymptotic linearity of $\tmle$. However, in the doubly
robust case of \ref{ass:DR1} and under data-adaptive estimation of
$\eta_0$, an asymptotic analysis of this drift term is difficult, and
the large sample distribution of the TMLE and AIPW is generally
unknown. This means that typical doubly robust estimators are ``doubly
consistent'', but they cannot be used to obtain ``doubly robust
inference'' such as confidence intervals that remain valid under
inconsistent estimation of at most one nuisance parameter.

Our main achievement is to propose an estimation technique that
Gaussianizes $\beta(\hat \eta)$, i.e., it makes this term
asymptotically linear. Gaussianizing drift terms such as
$\beta(\hat \eta)$ has been the subject of recent literature in
targeted learning
\cite{van2014targeted,benkeser2016doubly,diaz2017doubly}. These works
develop estimation techniques for various problems in cross-sectional
studies. In the next two sections we focus on the construction of
drift-corrected estimators that endow the TMLE with a doubly robust
asymptotic distribution through Gaussianization of the drift term
$\beta(\hat \eta)$. Extensions of cross-sectional techniques to the
longitudinal setting are non-trivial, as they involve constructing
asymptotic representations of $\beta(\hat\eta)$ which are
estimable. These representations depend on sequential conditional
expectations of the efficient influence function, which for the
longitudinal case involve carefully handling the at-risk sets for each
time point. The alternative representation of $\beta(\hat \eta)$ is
achieved through representations in terms of score equations in the
non-parametric model. Doubly robust inference is thus achieved through
the construction estimators $\hat\theta$ that solve such score
equations, thereby guaranteeing that $\beta(\hat \eta)$ behaves as
Gaussian variable asymptotically. The following remark provides an
argument that Gaussianizing the drift term can also aid in reducing
the bias of TMLE estimators when both nuisance estimators are
inconsistent.
\begin{remark}[Asymptotic bias of the TMLE under inconsistency of $\hat
  \eta$]\label{remark:bias}
  Assume $\hat \eta$ converges to some $\eta_1\neq \eta_0$. Let $\theta_1$
  denote the solution to $P_0D_{\eta_1,\theta}=0$, and note that
  $D_{\eta_1,\theta_1}=D_{\eta_1,\theta_0}-\theta_1 + \theta_0$. Under
  \ref{ass:donsker}, an application of Theorem 5.31 of
  \cite{vanderVaart98} yields
  \begin{equation*}
    \tmle-\theta_1= \beta(\hat \eta) +
    (\Pn - P_0)D_{\eta_1, \theta_1} + o_P\big(n^{-1/2} + |\beta(\hat \eta)|\big).
  \end{equation*}
  Substituting $D_{\eta_1,\theta_1}=D_{\eta_1,\theta_0}-\theta_1 +
  \theta_0$ yields
  \begin{equation*}
    \tmle-\theta_0= \beta(\hat \eta) +
    (\Pn - P_0)D_{\eta_1, \theta_0} + o_P\big(n^{-1/2} + |\beta(\hat \eta)|\big).
  \end{equation*}
  The empirical process term $(\Pn - P_0)D_{\eta_1, \theta_0}$ has
  mean zero. Thus, Gaussianizing $\beta(\hat \eta)$ is expected to
  reduce the bias of $\tmle$ when $\hat \eta$ is doubly inconsistent.
\end{remark}

\section{Asymptotic representation of the drift term}\label{sec:drift}

Our proposed method to endow the TMLE with a doubly robust asymptotic
distribution relies on an asymptotic representation of the drift term
$\beta(\hat \eta)$. This parameter is then estimated using targeted
minimum loss based estimation. In Theorem~\ref{lemma:betarep} below,
we show that this drift term may be written as a sum of score score
functions that depends on the true value of an additional nuisance
parameter (defined below) and the estimator $\hat \eta$, plus a term
that approaches zero at $n^{1/2}$-rate.  The insight that allows us to
construct a TMLE with doubly robust asymptotic distribution is that
Gaussianization of the drift term amounts to tilting the estimator
$\hat \eta$ in a way such that it also targets a solution of these
score equations, thereby estimating $\beta(\hat \eta)$ at
$n^{1/2}$-rate. We introduce the following assumption regarding the
convergence rate of $\hat \eta$ to $\eta_1$:
\begin{assumption}[Consistency rate for $\hat \eta$]\label{ass:DR2}
  Assume \ref{ass:DR1}. In addition, assume that
  $||\hat g_A - g_{A,1}||=o_P(n^{-1/4})$,
  $||\hat g_R - g_{R,1}||=o_P(n^{-1/4})$, and
  $||\hat h - h_1||=o_P(n^{-1/4})$.
\end{assumption}
As discussed in the introduction, the above rate is achievable by many
data-adaptive regression algorithms such as $\ell_1$ regularization,
tree-based methods, and neural networks. In particular,
\cite{benkeser2016highly} show that a rate of $n^{-1/4 - 1/[8(d+1)]}$,
where $d$ is the dimension of $W$, is achievable under the mild
assumption that the true regression function is right-hand continuous
with left-hand limits and has variation norm bounded by a
constant. Because it is generally not possible to know a-priori which
regression algorithm will be more appropriate for a given problem,
we propose to use an ensemble learner known as the super learner
\cite{vanderLaanPolleyHubbard07}. Super learning builds a combination
of predictors in a user-given library of candidate estimators, where
the weights minimize the cross-validated risk of the resulting
combination. Super learner has been shown to have important
theoretical guarantees
\cite{vanderLaanDudoitvanderVaart06,vanderVaartDudoitvanderLaan06}
such as asymptotic equivalence to the oracle selector.

The following lemma provides a representation for the drift term in
terms of score function in the tangent space of each of the models for
$g_{A,0}$, $g_{R,0}$, and $h_0$. Such approximation is achieved
through the definition of the following univariate regression
functions. For each time point $t$ and $k$, define the time-dependent
covariates $G_g(t,w) = g_{A,1}(w)G_1(t,w)$,
$C_h(k,w)=S_1(\tau,w)/S_1(k,w)$, and define
$M(w)=\sum_{t=1}^\tau S_1(t,w)$, where $G_1$ and $S_1$ denote the
censoring and survival probabilities under the limits $g_{R,1}$ and
$h_1$ of the estimators (\ref{ass:DR1}). Define the following weighted
error functions
\begin{align}
  e_{R,0}(k, w) & = E_0\left[\frac{R_k-g_{R,1}(k,W)}{G_g(k+1, W)}\midd J_k=1, A=1,C_h(k, W) = C_h(k, w)\right],\notag\\
  e_{L,0}(t, w) &=E_0\left[C_h(t, W)\{L_t-
                  h_1(t)\}\midd
                  I_t=1, A=1, G_g(t, W) =
                  G_g(t, w)\right],\label{eq:errors}\\
  e_{A,0}(w)&=E_0\left[\frac{A-g_{A,1}(W)}{g_{A,1}(W)} \midd
              M(W) = M(w)\right].\notag
\end{align}
For each $k$, define the conditional probabilities
\begin{align}
  d_{k,0}(t, w) &= P_0\left[R_t = 1\mid J_t = 1, A = 1, C_h(k, W) =
                  C_h(k,w)\right],\notag\\
  b_{k,0}(t, w) &= P_0\left[L_t = 1\mid I_t = 1, A = 1, C_h(k, W) =
                  C_h(k,w)\right],\notag\\
  u_{k,0}(t, w) &= P_0\left[R_t = 1\mid J_t = 1, A = 1, C_g(k, W) =
                  C_g(k,w)\right],\label{eq:probs}\\
  v_{k,0}(t, w) &= P_0\left[L_t = 1\mid I_t = 1, A = 1, C_g(k, W) =
                  C_g(k,w)\right],\notag\\
  q_0(w) &= P_0\left[A=1\mid S_1(\tau, W) = S_1(\tau,w)\right],\notag
\end{align}
and the corresponding time-to-event functions
\begin{align*}
  D_{k,0}(t) = \prod_{m=0}^{t-1}\{1 -
  d_{k,0}(m)\}&,\,\quad\,B_{k,0}(t) =
                \prod_{m=1}^{t}\{1-b_{k,0}(m)\},\\
  U_{k,0}(t) = \prod_{m=0}^{t-1}\{1-u_{k,0}(m)\}&,\,\quad\,V_{k,0}(t) = \prod_{m=1}^t\{1 - v_{k,0}(m)\}.
\end{align*}
We occasionally use the notation
$\lambda_0=(e_{R,0}, e_{L,0}, e_{A,0}, d_{k,0}, b_{k,0}, u_{k,0},
v_{k,0}, q_0)$ to refer to the collection of auxiliary nuisance
parameters. We now present the asymptotic representation of the drift
term.
\begin{theorem}[Asymptotic approximation of the drift term]\label{lemma:betarep}
  Define the covariates
  \begin{align}
    H_A(w)  =& \sum_{t=1}^\tau  \frac{U_{t,0}(t,w)}{G_0(t,w)}
               \frac{V_{t,0}(t-1,w)}{g_{A,0}(w)}
               e_{L,0}(t,w),\notag\\
    H_R(k,w)  =& \frac{1}{g_{A,0}(w)G_0(k+1,w)}\left\{\sum_{t=k+1}^{\tau}
                 \frac{V_{t,0}(t-1,w)}{V_{t,0}(k,w)}
                 \frac{U_{t,0}(t,w)}{U_{t,0}(k,w)}
                 \frac{G_0(k,w)}{G_0(t,w)}e_{L,0}(t,w) \right\},\label{eq:defH}\\
    H_L(t,w)=&\frac{S_0(\tau,w)}{S_0(t,w)}\left\{\sum_{k=0}^{t-1}\frac{S_0(t-1,w)}{S_0(k,w)} \frac{B_{k,0}(k,w)}{B_{k,0}(t-1,w)}
               \frac{D_{k,0}(k,w)}{D_{k,0}(t,w)}
               e_{R,0}(k,w)\right.\notag\\
             &+\left.\frac{e_{A,0}(w)}{q_0(w)D_{t,0}(t,w)}\frac{S_0(t-1,w)}{B_{t,0}(t-1,w)}\right\}\notag
  \end{align}
  and define the following score functions:
  \begin{align*}
    D_{A, \hat g}(o) &= -H_A(w)\{a-\hat g_A(w)\},\\
    D_{R, \hat g}(o) &= -\sum_{k=0}^{\tau-1}a\,j_k H_R(k, w)
                       \{r_k - \hat g_R(k,w)\},\\
    D_{L, \hat h}(o) &= -\sum_{t=1}^\tau
                       a\,
                       i_tH_L(t,w)\{l_t-\hat
                       h(t,w)\}.
  \end{align*}Under \ref{ass:DR2} we have
  $\beta(\hat \eta)=P_0\{D_{A,\hat g} + D_{R,\hat g} + D_{L,\hat h}\} + o_P(n^{-1/2})$.
\end{theorem}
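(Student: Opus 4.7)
The plan is to peel $\beta(\hat\eta)=P_0D_{\hat\eta,\theta_0}$ apart into cross-products of $g$-residuals and $h$-residuals, and then rewrite each such cross-product as the $P_0$-expectation of a score function.

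First I would substitute $\theta_0=E_0[S_0(\tau,W)]$ into $\beta(\hat\eta)$ and integrate out the IPW piece by conditioning on $W$, using the identities $E_0[A\,I_t\mid W]=g_{A,0}(W)S_0(t-1,W)G_0(t,W)$ and $E_0[L_t\mid A=1,I_t=1,W]=h_0(t,W)$. Combined with the product-of-hazards telescoping identity $\prod_m a_m-\prod_m b_m=\sum_k(\prod_{m<k}b_m)(a_k-b_k)(\prod_{m>k}a_m)$ applied to $\hat S(\tau)-S_0(\tau)$ and to $\hat G(t)-G_0(t)$, this yields an exact representation of $\beta(\hat\eta)$ as a double sum over $(t,k)$ of cross-products $[h_0(t)-\hat h(t)][\hat g_A-g_{A,0}]$ and $[h_0(t)-\hat h(t)][g_{R,0}(k)-\hat g_R(k)]$, with coefficients depending deterministically on $W$ through $\hat\eta$ and $\eta_0$.

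Next, under C2, I would replace the coefficient factors $\hat S,\hat G,\hat g_A$ by their $\eta_1$-limits $S_1,G_1,g_{A,1}$; the induced error is a triple product $(\hat\eta-\eta_1)\cdot(\hat g-g_0)\cdot(h_0-\hat h)$, bounded by $o_P(n^{-1/4})\cdot O(1)\cdot o_P(n^{-1/4})=o_P(n^{-1/2})$ via Cauchy--Schwarz and the DR assumption that at least one of $\|\hat g-g_0\|$ or $\|h_0-\hat h\|$ is $o_P(n^{-1/4})$. Then, decomposing each residual around its limit as $(h_0-\hat h)=(h_0-h_1)+(h_1-\hat h)$ and analogously for the $g$-residuals, each cross-product splits into four pieces: the doubly inconsistent piece $(h_0-h_1)(g_0-g_1)$ vanishes by DR; the doubly estimated piece $(h_1-\hat h)(g_1-\hat g)$ is $o_P(n^{-1/2})$ by Cauchy--Schwarz under C2; and the two remaining single-residual pieces reduce, under DR, to $P_0$-expectations of score-like functions with deterministic weights, using $E_0[A-\hat g_A\mid W]=g_{A,0}-\hat g_A$ and $E_0[A\,I_t(L_t-\hat h(t))\mid W]=g_{A,0}S_0(t-1)G_0(t)[h_0(t)-\hat h(t)]$ together with the DR identity $g_{A,1}\equiv g_{A,0}$ (respectively $h_1\equiv h_0$) that is needed to swap limits back to truths inside the relevant score coefficient.

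The last and most delicate step would be to match these deterministic score coefficients with the formulas for $H_A,H_R,H_L$. The factor $(h_0-h_1)(t,W)$ appearing alongside a $g$-score is projected onto the statistic $G_g(t,W)$ to produce $e_{L,0}$, and the factors $(g_{A,1}-g_{A,0})$ and $(g_{R,1}-g_{R,0})$ appearing alongside the $h$-score are projected onto $M(W)$ and $C_h(k,W)$ to produce $e_{A,0}$ and $e_{R,0}$; the orthogonal-complement projection error, multiplied by the remaining residual and integrated, is once more $o_P(n^{-1/2})$ by Cauchy--Schwarz and the $o_P(n^{-1/4})$ rate from C2. The conditional survivals $D_{k,0},B_{k,0},U_{k,0},V_{k,0},q_0$ arise from a second round of product-formula telescoping within these projected $\sigma$-fields, reshaping the nested sums into the closed-form expressions for $H_A,H_R,H_L$. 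I expect this matching step to be the main obstacle: unlike the cross-sectional setting of \cite{van2014targeted,benkeser2016doubly}, the longitudinal structure couples projections at multiple time points through the at-risk indicators $I_t,J_t$, so the correct projection statistics and the correct nesting of the telescoping identities must be tracked in concert to ensure all remainder terms aggregate to $o_P(n^{-1/2})$ rather than the na\"ive $o_P(n^{-1/4})$.
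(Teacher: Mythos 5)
Your plan matches the paper's proof essentially step for step: the same exact representation of $\beta(\hat\eta)$ as a sum over $t$ of cross-products $\{h_0(t)-\hat h(t)\}\{g_{A,0}G_0(t)-\hat g_A\hat G(t)\}$ (the paper imports this from Lemma~1 of D\'iaz (2018)), the same four-way decomposition around the limits $\eta_1$ with the doubly-inconsistent term killed by C1 and the doubly-estimated term by Cauchy--Schwarz under C2, the same projections of the surviving residuals onto the univariate statistics $G_g$, $C_h$, and $M$ to produce $e_{L,0}$, $e_{R,0}$, $e_{A,0}$, and the same telescoping through the conditional hazards $u_{k,0},v_{k,0},d_{k,0},b_{k,0},q_0$ to absorb the at-risk indicators into $H_A$, $H_R$, $H_L$. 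What remains unexecuted is exactly the final matching computation you flag as the main labor, but the route is the paper's.
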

The above approximation of the drift term depends only on
$\lambda_0$. Note that $\lambda_0$ depends on $w$ only through
one-dimensional transformations which are consistently estimable at
$n^{1/4}$-rate under \ref{ass:DR2}. Thus, under condition
\ref{ass:DR2}, the parameters $\lambda_0$ can be estimated
element-wise through non-parametric smoothing techniques. The
asymptotic normality result that we present in Section~\ref{sec:tmle}
requires a consistency rate assumption for estimation of
$\lambda_0$. We now discuss two possible estimation techniques and
introduce a rate assumption that will allow us to prove asymptotic
normality.


The general method for estimating (\ref{eq:errors}) and
(\ref{eq:probs}) proceeds by obtaining estimates of the covariates and
outcomes, and then applying any non-parametric regression
technique. For instance, for a second-order kernel function $K_h$ with
bandwidth $l$ the estimator of $b_{k,0}$ is given by
\begin{equation}
  \hat b_k(t,w) = \frac{\sum_{i = 1}^nI_{t,i}\,A_i\,K_{l}\{\hat
    C_h(k, W_i) -
    \hat C_h(k, w)\}L_{t,i}}{\sum_{i = 1}^nI_{t,i}\,A_i\,K_{l}\{\hat
    C_h(k, W_i) -
    \hat C_h(k, w)\}},\label{eq:bhat}
\end{equation}
where $\hat C_h(k, w) = \hat S(\tau, w) / \hat S(k, w)$, and $\hat S$
is constructed using the preliminary estimator $\hat h$ and formula
(\ref{defS}). The optimal bandwidth $\hopt$ may be chosen using K-fold
cross-validation \cite{vanderVaartDudoitvanderLaan06}. The error
functions may be estimated analogously by plugging in estimates
$\hat \eta$ in all the quantities involved, and performing kernel
smoothing. Alternatively, other non-parametric smoothing methods may
be used for this purpose. For example, the highly adaptive lasso (HAL)
\cite{benkeser2016highly} proceeds by constructing an alternative
representation of the true function as a sum of basis functions that
grows with the sample size, and then performing $\ell_1$
regularization to select the appropriate basis functions.

The analysis of the drift-corrected estimators may be complicated due
to the two-stage estimation process whereby the covariates $C$ are
estimated and then used in a univariate smoothing technique. We
introduce the following assumption about the estimators of
$\lambda_0$, which helps us isolate this univariate non-parametric
smoothing in (\ref{eq:errors}) and (\ref{eq:probs}) from the methods
used to estimate the auxiliary covariates and outcomes in the same
expressions.

\begin{assumption}[Convergence rate for auxiliary nuisance parameter
  estimators] \label{ass:ass4} Let $\hat \lambda_0$ denote
  (\ref{eq:errors}) and (\ref{eq:probs}) with the auxiliary covariates
  $G_g$, $C_h$, and $M$ replaced by estimates $\hat C_g$,
  $\hat C_h$, and $\hat M$. For example,
  \[\hat b_{k,0}=P_0\left[L_t = 1\mid I_t = 1, A = 1, \hat C_h(k, W) =
      \hat C_h(k,w)\right].\] Assume that the smoothing method used to
  obtain $\hat \lambda$ satisfies
  $||\hat\lambda - \hat\lambda_0||=o_P(n^{-1/4})$.
\end{assumption}
Note that the above assumption is purely about the consistency of the
smoothing method used to obtain $\hat \lambda$, because the covariates
$\hat C_h$, $\hat C_h$, and $\hat M$ are the same in $\hat\lambda$ and
$\hat\lambda_0$. Non-parametric smoothing methods can be expected to
satisfy this assumption in certain situations. For example, under the
assumption that the map
$c \mapsto P_0\left[L_t = 1\mid I_t = 1, A = 1, \hat C_h(k, W) =
  c\right]$ is twice differentiable, a kernel regression estimator
with optimal bandwidth guarantees the desired convergence rate
$||\hat b-\hat b_0||=o_P(n^{-1/4})$. The HAL also achieves the desired
rate under the weaker assumption that
$c \mapsto P_0\left[L_t = 1\mid I_t = 1, A = 1, \hat C_h(k, W) =
  c\right]$ is c\`adl\`ag with bounded sectional variation norm
\cite{benkeser2016highly}. We make assumption \ref{ass:ass4} through
the remainder of the manuscript.

\begin{remark}
  A substitution estimator of the drift term may be constructed by
  plugging in all the nuisance estimates in the alternative
  representation given in Theorem~\ref{lemma:betarep}. An intuitive
  solution to the doubly robust inference problem would then be to
  subtract this term from the $\tmle$. While this makes intuitive
  sense, it does not guarantee that the resulting estimator will have
  the desired properties. The reason is that this strategy fails to
  control the term $|\beta(\hat\eta)|$ that shows up in the
  $o_P(\cdot)$ expression in (\ref{eq:wh}). The authors of
  \cite{van2014targeted,benkeser2016doubly,diaz2017doubly} also
  noticed this problem in the cross-sectional setting, a more in-depth
  explanation of the issue may be found in these references.
\end{remark}


\section{TMLE with doubly robust inference}\label{sec:tmle}

We now proceed to present an estimation method to Gaussianize the
drift term $\beta(\hat \eta)$.  As discussed in the previous section,
it is necessary to construct estimators $\hat \eta$ such that
$\beta(\hat \eta)$ is asymptotically Gaussian. Lemma
\ref{lemma:betaaslin} in the Supplementary Materials shows that, for
any $\hat\eta$,
\begin{equation}
  \label{eq:wh2}
  \beta(\hat \eta) - \hat\beta(\hat \eta) = -(\Pn-P_0)\{D_{R,\hat g} +
  D_{A,\hat g} + D_{L,\hat h}\} + o_P(n^{-1/2}),
\end{equation}
where $\hat\beta(\hat \eta)$ is constructed by plugging in estimates
of $D_{R,\hat g}$, $D_{A,\hat g}$, and $D_{L,\hat h}$ in the result of
Theorem~\ref{lemma:betarep}. In light of expression~(\ref{eq:wh}), an
asymptotically linear estimator $\hat \eta$ can be achieved through
the construction of an estimator $\tilde \eta$ that satisfies
$\hat\beta(\tilde g)=0$. In the following, this construction is based
on the fact that $D_{R,\hat g}$, $D_{A,\hat g}$, and $D_{L,\hat h}$
are score equations in the model for $g_{R,0}$, $g_{A,0}$, and $h_0$,
respectively. As a result, adding the corresponding $H$ covariates to
a logistic tilting model will tilt an initial estimator
$\hat \eta=(\hat g_A,\hat g_R, \hat h)$ towards a solution
$\tilde \eta$ of the Gaussianizing equations
$\hat \beta(\tilde \eta)=0$. Our method for solving this estimating
equations is rooted in the ideas of targeted learning
\cite{vanderLaanRose11}. Readers familiar with targeted learning will
see the similarities between the iterative procedure below and the
estimators presented, e.g., in \cite{Moore11}. As in \cite{Moore11},
use the framework of targeted learning to solve the relevant
estimating equations. Unlike \cite{Moore11}, here we are not only
interested in solving the efficient influence function estimating
equation, but also in simultaneously solving the Gaussianizing
equation $\hat\beta(\tilde \eta)=0$. In what follows we will use the
following modified representation of the data set:
\begin{equation}
  \{(t,W_i,A_i,J_{t,i}, R_{t,i},I_{t+1,i},L_{t+1,i}): t = 0,\ldots,K-1; i
  =1,\ldots,n\}.\label{longform}
\end{equation}
This data set is referred to as the long form, and the
original data set
\begin{equation}
  \{(W_i,A_i,\Delta_i,\tilde T_i): i =1,\ldots,n\}\label{shortform}
\end{equation}
is referred to as the short form. 
The proposed targeted TMLE is defined by the following algorithm:
\begin{enumerate}[label = Step~\arabic*., align=left, leftmargin=*]
\item \textit{Initial estimators.} Obtain initial estimators
  $\hat g_A$, $\hat g_R$, and $\hat h$ of $g_{A,0}$, $g_{R,0}$, and
  $h_0$. These estimators may be based on data-adaptive predictive
  methods that allow flexibility in the specification of the
  corresponding functional forms. Construct estimators $\hat e_A$,
  $\hat e_R$, $\hat e_L$ by fitting a univariate regression method
  regression as described in the previous subsection. Similarly, for
  each $k$, compute estimators of $d_{k,0}$, $d_{k,0}$, $d_{k,0}$,
  $d_{k,0}$, and $d_{k,0}$ by also running univariate regressions.
\item \textit{Compute auxiliary covariates.} For each
  subject $i$, compute the auxiliary covariates $\hat H_A(W_i)$, $\hat
  H_R(t, W_i)$, and $\hat H_L(t, W_i)$ by plugging in the estimators
  of the previous step in the definitions given in (\ref{eq:defH}). In
  addition, compute the covariate
  \[\hat Z(t, W_i)=\frac{\hat S(\tau,W_i)}{\hat g_A(W_i)\hat S(t,W_i)\hat G(t,W_i)}.\]
  The covariate $\hat Z(t, W_i)$ is fundamental to obtain an estimator
  that solves the efficient influence function estimating equation
  (see \cite{Moore11}).
\item \textit{Solve estimating equations.} Estimate the parameter
  $\epsilon = (\epsilon_A, \epsilon_R, \epsilon_L)$ in the following logistic
  tilting models $g_{A,\epsilon}$, $ g_{R,\epsilon}$, and
  $h_\epsilon$ for $g_{A,0}$, $g_{R,0}$, and $h_0$:
  \begin{align}
    \logit g_{A,\epsilon}(W_i) &= \logit \hat g_A(W_i) + \epsilon_A
                                 \hat H_A(W_i)\label{eq:submodelA}\\
    \logit g_{R,\epsilon}(t,W_i) &= \logit \hat g_R(t,W_i) + \epsilon_R
                                   \hat H_R(t,W_i).\label{eq:submodelR}\\
    \logit h_\epsilon(t,W_i) &= \logit \hat h(t,W_i) + \epsilon_{L,1}
                                   \hat H_L(t,W_i) + \epsilon_{L,2}
                               \hat Z(t, W_i).\label{eq:submodelL}
  \end{align}
  where $\logit(p)=\log\{p(1-p)^{-1}\}$. Here, $\logit \hat g_R(t,w)$,
  $\logit \hat g_A(w)$, and $\logit \hat h(w)$ are offset variables
  (i.e., variables with known parameter equal to one). The above
  parameters may be estimated by fitting standard logistic regression
  models. For example, $\epsilon_R$ is estimated through a logistic
  regression model of $R_t$ on $\hat H_R(t,W_i)$ with no intercept and
  an offset term equal to $\logit \hat g_R(t,W)$ among observations
  with $(J_t, A) = (1,1)$ in the long form dataset. Analogously,
  $\epsilon_A$ may be estimated by fitting a logistic regression model
  of $A$ on $H_A$ with no intercept and an offset term equal to
  $\logit \hat g_A(W)$ using all observations in the short form
  dataset, and $\epsilon_L$ is estimated through a logistic regression
  model of $L_t$ on $(\hat H_L(t,W_i), \hat Z(t, W_i))$ with no intercept and an offset
  term equal to $\logit \hat h(t,W)$ among observations with
  $(I_t, A) = (1,1)$. Let $\hat\epsilon$ denote these estimates.
\item \textit{Update estimators and iterate.} Define the updated
  estimators as $\hat g_R = g_{R,\hat\epsilon}$,
  $\hat g_A=g_{A,\hat \epsilon}$, and $\hat h =
  h_{\hat\epsilon}$. Repeat steps 2-4 until convergence. In practice,
  we stop the iteration once
  $\max\{|\hat\epsilon_R|, |\hat\epsilon_A|, |\hat\epsilon_L|\}< 10^{-4}n^{-3/5}$.
\item \textit{Compute IPW.} Denote the estimators in the last step of
  the iteration with $\tilde g_R$, $\tilde g_A$, and $\tilde h$. The
  drift-corrected TMLE of $\theta_0$ is defined as
  \[\dtmle = \frac{1}{n}\sum_{i=1}^n \prod_{m=1}^{\tau}
      \{1-\tilde h(m,W_i)\}.\]
\end{enumerate}

The large sample distribution of the above TMLE is given in the
following theorem:

\begin{theorem}[Asymptotic Distribution of $\dtmle$]\label{theo:dr}
  Assume \ref{ass:donsker} and \ref{ass:DR2} hold for $\tilde \eta$,
  and \ref{ass:ass4} holds for $\hat\lambda$. Then
  \[n^{1/2}(\dtmle - \theta_0)\to N(0, \sigma^2),\]where
  $\sigma^2 = \var\{\infun(O)\}$ and
  $\infun(O)=D_{\eta_1, \theta_0}(O) - D_{L,h_1}(O) - D_{R,g_1}(O) -
  D_{A,g_1}(O)$. Furthermore:
  \begin{enumerate}[label=(\roman*)]
  \item if $(g_{A,1}, g_{R,1}) = (g_{A,0}, g_{R,0})$ then
    $D_{R,g_1}(O)= D_{A,g_1}(O)=0$, and
  \item if $h_1=h_0$, then $D_{L,h_1}(O)=0$.
  \end{enumerate}
  Thus, if $\eta_1=\eta_0$ then $\infun=D_{\eta_0, \theta_0}$ and
  $\dtmle$ is efficient.
\end{theorem}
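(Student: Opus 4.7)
The plan is to combine the standard TMLE expansion with the drift representation from Theorem~\ref{lemma:betarep}, and then exploit the score equations solved by the iterative logistic tilting to Gaussianize the drift term.

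First, I would establish the usual TMLE expansion for $\dtmle$. Because the covariate $\hat Z(t, W_i)$ is included in the submodel (\ref{eq:submodelL}), the final iterate $\tilde\eta$ satisfies $\Pn D_{\tilde\eta, \dtmle} = o_P(n^{-1/2})$ up to the stopping tolerance $10^{-4}n^{-3/5}$. Logistic tilting by a bounded parametric perturbation preserves both $L_2(P_0)$ convergence rates and Donsker entropy, so $\tilde\eta$ inherits \ref{ass:DR2} and \ref{ass:donsker} from $\hat\eta$. The standard argument producing (\ref{eq:wh}) applied to $\tilde\eta$ then yields
\begin{equation*}
  \dtmle - \theta_0 = \beta(\tilde\eta) + (\Pn - P_0)D_{\eta_1, \theta_0} + o_P\bigl(n^{-1/2} + |\beta(\tilde\eta)|\bigr).
\end{equation*}

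Second, I would apply Lemma~\ref{lemma:betaaslin} (the supplement's expression (\ref{eq:wh2})) to $\tilde\eta$ to obtain
\begin{equation*}
  \beta(\tilde\eta) - \hat\beta(\tilde\eta) = -(\Pn - P_0)\{D_{R, \tilde g} + D_{A, \tilde g} + D_{L, \tilde h}\} + o_P(n^{-1/2}),
\end{equation*}
and then argue $\hat\beta(\tilde\eta) = o_P(n^{-1/2})$. This is the Gaussianization step: by construction, the logistic tilts (\ref{eq:submodelA})--(\ref{eq:submodelL}) force the empirical score equations $\Pn \hat D_{A, \tilde g} = \Pn \hat D_{R, \tilde g} = \Pn \hat D_{L, \tilde h} = 0$ to hold at the stopping rule tolerance, where $\hat D$ denotes the scores with $\hat\lambda$ plugged in. Assumption~\ref{ass:ass4} (the $n^{-1/4}$-rate for $\hat\lambda$) combined with \ref{ass:DR2} then allows me to replace the estimated auxiliary nuisances by their limits through the usual cross-product bound, giving $\hat\beta(\tilde\eta) = o_P(n^{-1/2})$.

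Third, asymptotic equicontinuity of the empirical process over the Donsker class in \ref{ass:donsker}, together with $\tilde\eta \to \eta_1$ in $L_2(P_0)$, lets me swap $\tilde\eta$ for its limit:
\[(\Pn - P_0)\{D_{R, \tilde g} + D_{A, \tilde g} + D_{L, \tilde h}\} = (\Pn - P_0)\{D_{R, g_1} + D_{A, g_1} + D_{L, h_1}\} + o_P(n^{-1/2}).\]
Combining the displays and noting that the residual $|\beta(\tilde\eta)|$ in (\ref{eq:wh}) is itself now $o_P(n^{-1/2})$ gives the asymptotic linear expansion
\[\dtmle - \theta_0 = (\Pn - P_0)\bigl\{D_{\eta_1, \theta_0} - D_{R, g_1} - D_{A, g_1} - D_{L, h_1}\bigr\} + o_P(n^{-1/2}),\]
so Lindeberg's CLT delivers $n^{1/2}(\dtmle - \theta_0) \rightsquigarrow N(0, \sigma^2)$. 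Claims (i)--(ii) follow by direct inspection of (\ref{eq:errors}) and (\ref{eq:defH}): when a component of $\eta_1$ equals the truth, the corresponding inner residual in the conditional expectation defining $e_{R,0}$, $e_{L,0}$, or $e_{A,0}$ is already mean zero given $W$, so the weighted residual and hence the auxiliary covariate it feeds into vanish, killing the score. When $\eta_1 = \eta_0$, all three vanish simultaneously, $\infun = D_{\eta_0,\theta_0}$, and efficiency follows.

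The main obstacle is the Gaussianization step: carefully controlling the gap between $\hat\beta(\tilde\eta)$ and the empirical score equations that the algorithm actually drives to zero, because the scores used in the iteration employ the estimated auxiliary parameters $\hat\lambda$ in place of the truth $\lambda_0$ appearing in Theorem~\ref{lemma:betarep}. The $n^{-1/4} \times n^{-1/4} = n^{-1/2}$ cross-product between $\hat\lambda - \hat\lambda_0$ and $\tilde\eta - \eta_1$ is precisely what makes the replacement negligible, and simultaneously tracking these two rates while using the Donsker property to remove the empirical-process contribution is the delicate accounting that carries the argument.
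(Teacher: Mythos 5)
Your proposal is correct and takes essentially the same route as the paper: the paper's proof of this theorem is literally the combination of the expansion (\ref{eq:wh}) with Lemma~\ref{lemma:betaaslin}, whose proof Gaussianizes the drift exactly as you describe --- using the fact that the tilting step forces $\Pn \hat D_{A,\tilde g}=\Pn \hat D_{R,\tilde g}=\Pn \hat D_{L,\tilde h}=0$, the cross-product bounds under \ref{ass:DR2} and \ref{ass:ass4}, and the Donsker condition (via example 2.10.10 of van der Vaart and Wellner and theorem 19.24 of van der Vaart) to replace the estimated scores by their limits $D_{A,g_1}, D_{R,g_1}, D_{L,h_1}$. Your justification of claims (i)--(ii) via vanishing error functions is also the paper's argument (the end of the proof of Lemma~\ref{lemma:betaaslin} notes that $h_1=h_0$ forces $e_{L,0}=0$ and hence $H_R=0$), so the proposal matches the paper throughout.
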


The proof of this theorem is presented in the Supplementary
Materials. Broadly, the proof proceeds as follows. First, inclusion of
the covariate $\hat Z$ guarantees that the submodel
$\{h_{\epsilon}:\epsilon\}$ generates a score which is equal to the
first term in the right hand side of (\ref{eq:defD}). This is used in
the proof to show that the estimator solves the efficient influence
function estimating equation and therefore satisfies
(\ref{eq:wh}). Then, we show that $\hat \beta(\tilde g)$ is an
asymptotically linear estimator of $\beta(\tilde g)$ with influence
function $D_{L,h_1}(O)+D_{R,g_1}(O) +
D_{A,g_1}(O)$. 
Since $\hat \beta(\tilde g)=0$, this asymptotic linearity result also
implies $|\beta(\tilde g)|=O_P(n^{-1/2})$, which according to the
discussion in the previous section is a requisite for asymptotic
linearity of $\dtmle$. An important part of this theorem is that, in
the double consistency case in which $\eta_1=\eta_0$, we have
$D_{L,h_1}(O) = D_{R,g_1}(O) = D_{A,g_1}(O)=0$, and the estimator
$\dtmle$ is asymptotically equivalent to the $\tmle$, both being
efficient. Unlike $\tmle$, the distribution of the estimator $\dtmle$
under condition \ref{ass:DR2} is known, and the variance given in the
theorem can be used to compute doubly robust standard errors and to
perform hypothesis tests. That is, the Wald-type confidence interval
$\dtmle \pm z_{\alpha} \hsigma/\sqrt{n}$, where $\hsigma^2$ is the
empirical variance of $\widehat \infun(O)$ has correct asymptotic
coverage $(1-\alpha)100\%$, whenever at least one of $\tilde g$ or
$\tilde h$ converges to their true value at the stated rate.

\subsection{Removing the Donsker Condition}


Asymptotic linearity of $\dtmle$ requires Donsker condition
\ref{ass:donsker}. This is a powerful empirical processes condition
that allows the analysis of many estimators in semi-parametric models
\cite{vanderVaart98}. However, this condition may be restrictive in
high-dimensional settings, or when the estimator of the censoring
mechanism is in a large class of function. For example functions
classes with unbounded variation are generally not Donsker, and highly
adaptive estimators such as random forests may have unbounded
variation. Fortunately, \ref{ass:donsker} may be avoided by
introducing cross-fitting into our estimation procedure. Cross-fitting
was first proposed in the context of targeted minimum loss-based
estimation in \cite{zheng2011cross}, and was subsequently applied to
estimating equations in \cite{chernozhukov2016double}.

Our cross-fitting procedure proceeds as follows. Let
${\cal V}_1,\ldots,{\cal V}_J$ denote a random partition of the index
set $\{1,\ldots,n\}$ into $J$ validation sets of approximately the
same size. That is, ${\cal V}_j\subset \{1,\ldots,n\}$;
$\bigcup_{j=1}^J {\cal V}_j = \{1,\ldots,n\}$; and
${\cal V}_j\cap {\cal V}_{j'}=\emptyset$. In addition, for each $j$,
the associated training sample is given by
${\cal T}_j=\{1,\ldots,n\}\setminus {\cal V}_j$. Denote by
$\hat \eta_{{\cal T}_j}$ the estimator of $\eta_0$ obtained by
training the corresponding prediction algorithms using only data in
the sample ${\cal T}_j$. Let also $j(i)$ denote the index of the
validation set which contains observation $i$. The cross-fitted TMLE
estimator is constructed replacing $\hat \eta$ by its cross-fitted in
steps 2 and 3 of the first iteration of the TMLE algorithm described
in Section~\ref{sec:tmle}. Let $\cfdtmle$ denote the resulting
estimator. We have the following theorem.

\begin{theorem}[Asymptotic Distribution of $\cfdtmle$]\label{theo:drcf}
  Assume \ref{ass:DR2} holds for $\tilde \eta$ and \ref{ass:ass4}
  holds for $\hat\lambda$. Then
  $n^{1/2}(\cfdtmle - \theta_0)\to N(0, \sigma^2)$, where $\sigma^2$
  is defined as in Theorem~\ref{theo:dr}.
\end{theorem}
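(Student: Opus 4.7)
The plan is to follow the same decomposition used in the proof of Theorem~\ref{theo:dr} and replace every invocation of the Donsker property \ref{ass:donsker} by a sample-splitting (Chebyshev-type) argument. Recall the skeleton of the non-cross-fitted proof: combining (\ref{eq:wh}) with (\ref{eq:wh2}) and the tilting identity $\hat\beta(\tilde\eta)=0$ yields
\begin{equation*}
  \dtmle-\theta_0 = (\Pn-P_0)\{D_{\eta_1,\theta_0}-D_{A,g_1}-D_{R,g_1}-D_{L,h_1}\} + o_P(n^{-1/2}),
\end{equation*}
after which the CLT and Slutsky identify the limit $N(0,\sigma^2)$. The target for Theorem~\ref{theo:drcf} is exactly the same decomposition for $\cfdtmle$, so the proof reduces to re-justifying each $o_P(n^{-1/2})$ remainder without Donsker.

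The workhorse is the standard cross-fitting lemma: for each fold $j$, let $\Pn^{(j)}$ denote the empirical measure on ${\cal V}_j$ and note that, conditionally on the training fold ${\cal T}_j$, the function $\hat f_j := D_{\tilde\eta_{{\cal T}_j},\theta_0}-D_{\eta_1,\theta_0}$ is fixed and the observations in ${\cal V}_j$ are i.i.d.\ from $P_0$. Hence
\begin{equation*}
  E\{(\Pn^{(j)}-P_0)\hat f_j \mid {\cal T}_j\} = 0, \qquad \var\{(\Pn^{(j)}-P_0)\hat f_j \mid {\cal T}_j\} \leq |{\cal V}_j|^{-1}\,\|\hat f_j\|^2.
\end{equation*}
Because the efficient influence function (\ref{eq:defD}) is Lipschitz in $\eta$ on the set $\{g_A>\epsilon,\,g_R<1-\epsilon\}$ from \ref{ass:positivity}, $\|\hat f_j\|\to_P 0$ under \ref{ass:DR2}, so conditional Markov gives $(\Pn^{(j)}-P_0)\hat f_j = o_P(n^{-1/2})$; aggregating across the $J$ folds yields $(\Pn-P_0)\{D_{\tilde\eta,\theta_0}-D_{\eta_1,\theta_0}\} = o_P(n^{-1/2})$, which is precisely what the Donsker hypothesis supplied for Theorem~\ref{theo:dr}.

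With that lemma in hand I would argue in three steps. First, the fold-wise TMLE tilting in steps 3--4 forces the cross-fitted EIF equation $\Pn D_{\tilde\eta_{{\cal T}_{j(\cdot)}},\cfdtmle}=o_P(n^{-1/2})$ to hold exactly as in Section~\ref{sec:tmle}, since the tilting is an algebraic identity that does not depend on the Donsker condition; combining with the sample-splitting lemma above and the standard double-robustness bound $P_0 D_{\tilde\eta,\theta_0}=\beta(\tilde\eta)$, I obtain the analog of (\ref{eq:wh}) with $|\beta(\tilde\eta)|$-remainder. Second, I apply the same sample-splitting argument to the scores $D_{A,\tilde g}$, $D_{R,\tilde g}$, $D_{L,\tilde h}$ in (\ref{eq:wh2}); here \ref{ass:ass4} ensures that the plug-in $\hat\beta$ converges fast enough, and the tilting yields $\hat\beta(\tilde\eta)=0$ so that $\beta(\tilde\eta)=-(\Pn-P_0)\{D_{A,\tilde g}+D_{R,\tilde g}+D_{L,\tilde h}\}+o_P(n^{-1/2})$. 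Third, the double-robust $n^{-1/4}\times n^{-1/4}$ product bound giving $\beta(\tilde\eta)=O_P(n^{-1/2})$ is a pure analytic calculation on $P_0$ that does not involve empirical processes at all, so it carries over verbatim. Assembling the three pieces reproduces the stated asymptotic linear expansion and the variance $\sigma^2$ from Theorem~\ref{theo:dr}.

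The main obstacle is bookkeeping around the auxiliary nuisance $\hat\lambda$. In Section~\ref{sec:drift}, $\hat\lambda$ is built from the primary estimates $\hat\eta$, so in the cross-fitted version one must train $\hat\lambda_{{\cal T}_j}$ using the same partition (or a further nested split) and verify that \ref{ass:ass4} still holds fold-wise; only then does the sample-splitting lemma apply to the covariates $\hat H_A,\hat H_R,\hat H_L$ in the score functions. A second, milder subtlety is that Theorem~\ref{lemma:betarep} must be invoked fold-wise with limit $\eta_1$ common across folds, which follows because each $\tilde\eta_{{\cal T}_j}$ satisfies \ref{ass:DR2} with the same limit. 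Once these technical points are verified, no new ideas beyond cross-fitting are needed.
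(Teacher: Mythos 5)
Your proposal is correct and follows essentially the same route as the paper, which proves Theorem~\ref{theo:drcf} only by citing \cite{zheng2011cross} and noting that, conditionally on each training fold ${\cal T}_j$, the nuisance estimators are fixed functions so that no entropy conditions are needed; your conditional mean-zero/variance bound and fold-wise aggregation is precisely the standard way to make that observation rigorous. You in fact supply more detail than the paper does (notably the fold-wise treatment of $\hat\lambda$ and the Lipschitz bound on the efficient influence function), which the paper leaves implicit.
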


The proof of this theorem is a straightforward adaptation of the
proofs in \cite{zheng2011cross} to our Theorem~\ref{theo:dr}. The
proof rests on the key observation that for each validation set
${\cal V}_j$, the estimators $\hat g_{A, {\cal T}_{j}}$ and
$\hat g_{R, {\cal T}_{j}}$ are fixed functions, and thus no entropy
conditions are required in the application of empirical process
results. The interested reader is encouraged to consult the original
articles \cite{zheng2011cross,chernozhukov2016double} for more details
and general proofs on cross-fitting.



\section{Numerical study}\label{sec:simula}

In this section we present the results of a simulation experiment to
illustrate the finite sample performance of statistical inference
based on the asymptotic distribution given in Theorem~\ref{theo:dr}.


We evaluate our method using a covariate vector of dimension $d=10$,
where the data generating mechanism for $g_{R,0}$, $g_{A,0}$, and
$h_0$ is sparse; and $\ell_1$ regularized logistic regression is used
to estimate these nuisance parameters. This estimator satisfies
condition~\ref{ass:DR2} (see e.g., Theorem 4.1 of
\cite{rigollet2011exponential}).

For each sample size $n\in \{400, 900, 1600, 2500, 3600, 4900\}$, we
generate 1000 datasets from a conditional distribution defined as
follows. First, a covariate vector $W$ was generated from
$\text{TN}(0,\Sigma)$ where $\text{TN}$ is a multivariate normal
distribution with each margin truncated at $(-1.5, 1.5)$, and $\Sigma$
is a $10\times 10$ symmetric Toeplitz matrix with first row equal to
$(10, \ldots, 1) / 10$. We then define the unobserved variables
\begin{align*}
  U_1 &= |W_1W_2|^{1/2}-|W_{10}|^{1/2} +
        \cos(W_{5})-\cos(W_6)\cos(W_5)\\
  U_2 &= |W_1W_{10}|^{1/2}-|W_{9}|^{1/2} +
        \cos(W_{5})-\cos(W_7)\cos(W_6).
\end{align*}
The data are generated as
\begin{align*}
  A\mid W=w &\sim \text{Ber}\{g_{A,0}(u_1)\}\notag\\
  R_t\mid J_t=1,A=a,W=w &\sim \text{Ber}\{g_{R,0}(t,a,u_2)\}\\
  L_t\mid I_t=1,A=a,W=w &\sim \text{Ber}\{h_0(t,a,u_1)\}\notag,
\end{align*}
where Ber$(p)$ denotes the Bernoulli distribution with parameter $p$
and
\begin{align*}
  g_{A,0}(u) &= \expit(-2u)\notag\\
  g_{R,0}(t,a,u) &= \expit\left\{-4 + a + a\cos(t) - aut^{1/2}\right\}\\
  h_0(t,a,u) &= \expit\left\{-3 + a - 2u\log(t) + 0.5au - 0.6(a+1)u\sin(t)\right\}\notag,
\end{align*}
As previously discussed, $g_{R,0}$, $g_{A,0}$, and $h_0$ are estimated
through $\ell_1$ regularized logistic regression. For consistent
estimation of $g_{A,0}$ the design matrix contains all $W$ covariates
in addition to their absolute squared root and cosine transformations
as well as all two-way interactions between all these terms. For
consistent estimation of $g_{R,0}$ and $h_0$, the design matrix is
constructed by considering all main effects and interactions of: (i)
time as a categorical variable, (ii) the treatment indicator $A$, and
(iii) all the terms considered for $g_{A,0}$. Inconsistent estimators
were obtained through standard logistic regression with main terms
only. We considered three scenarios for estimation of the nuisance
parameters: (a) all $g_{A,0}$, $g_{R,0}$, and $h_0$ consistently
estimated, (b) only $h_0$ consistently estimated, and (c) only
$g_{A,0}$ and $g_{R,0}$ consistently estimated. We also performed a
simulation where all nuisance parameters are inconsistently estimated,
but the results are uninformative and are not presented.

We compute two estimators: a doubly robust $\tmle$ \cite{Moore11} and
our proposed $\dtmle$. The $\tmle$ estimator has been shown to
outperform the $\aipw$ estimator at finite samples in simulation
studies \cite{Porter2011}, and both are expected to have similar
asymptotic behavior. We evaluate the performance of the estimators in
terms of bias, variance, mean squared error, and coverage of the
$90\%$, $95\%$, and $99\%$ confidence intervals. Some of these
quantities are multiplied by $n^{1/2}$ to evaluate
$n^{1/2}$-consistency. We also evaluate $\hsigma$ as an estimator
of the standard error of the estimators.

The results are presented in Figure~\ref{fig:uni}. Some expected
properties of the estimators, which we corroborate in the simulation
study are:
\begin{itemize}
  \itemsep0em
\item The best behavior in terms of all metrics is obtained in
  scenario (a) for both estimators. In this case, both estimators have
  very similar asymptotic performance, with $\dtmle$ having slightly
  better bias in the smaller sample sizes.
\item $\dtmle$ has significantly smaller bias than $\tmle$ for
  scenarios (b) and (c).
\item The proposed estimator of the standard error $\hsigma$ seems to
  consistently estimate the standard error of $\dtmle$ in all three
  scenarios, whereas the na\"ive estimator for $\tmle$ seems to be
  inconsistent in scenarios (b) and (c).
\item The coverage probabilities for $\dtmle$ are closer to the
  nominal level for all sample sizes and all three scenarios. Of
  particular relevance, $\dtmle$ seems to provide very important
  small-sample gains in scenarios (b) and (c).
\end{itemize}

According to Remark~\ref{remark:bias}, solving the debiasing equation
$\beta(\hat \eta)=0$ could reduce this bias of $\dtmle$, in comparison
to the bias of $\tmle$ in the case of double inconsistency (results
not shown). However, the MSE of both estimators was identical, and
increased linearly in $n^{1/2}$-scale. Identifying scenarios under
which which this bias reduction can be expected is an open problem.

\begin{figure}[!htb]
  \centering
    \includegraphics[scale=0.5]{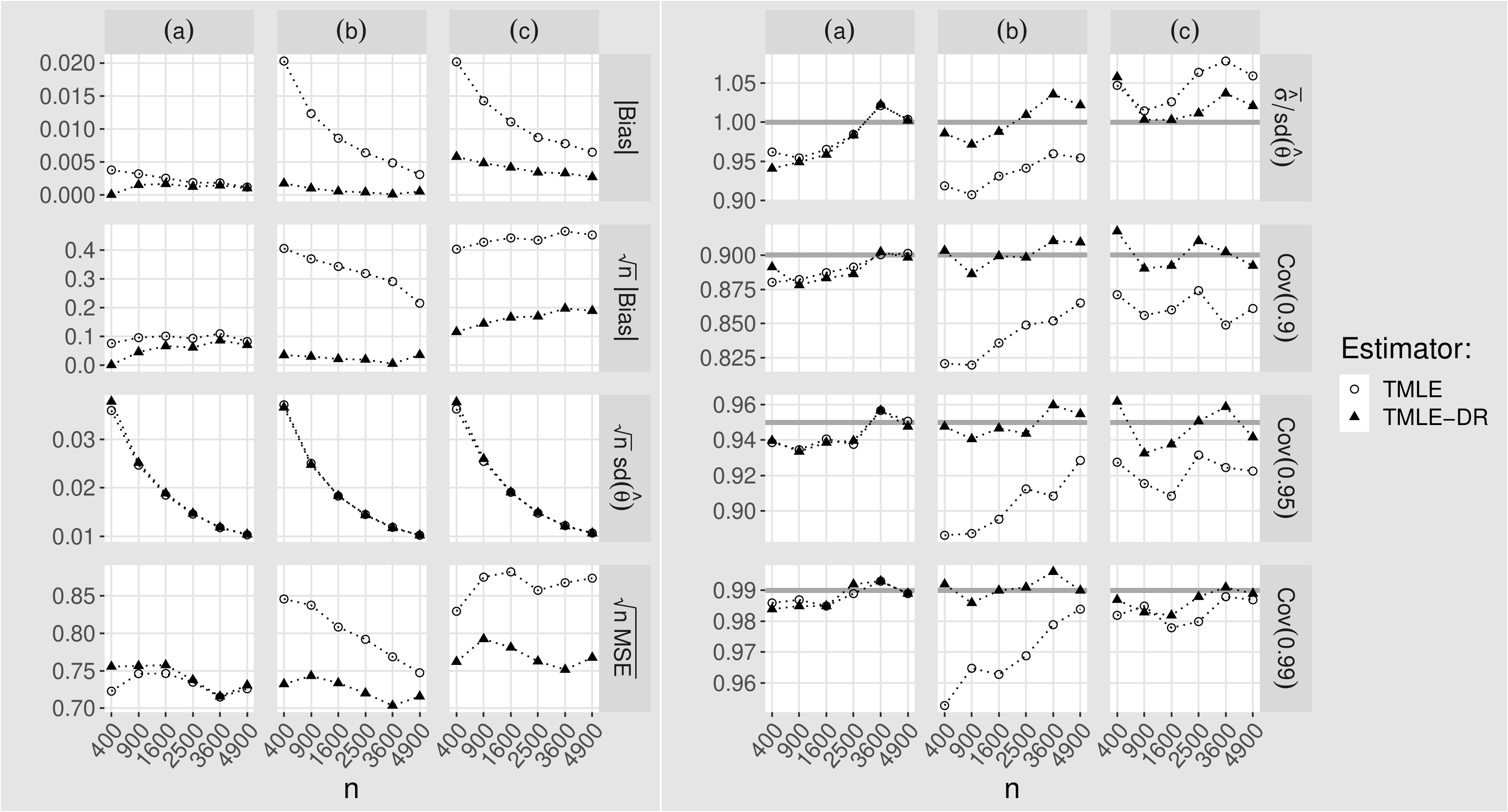}
    \caption{Results of the numerical study. The scenarios under study
      are: (a) all $g_{A,0}$, $g_{R,0}$, and $h_0$ consistently
      estimated, (b) only $h_0$ consistently estimated, and (c) only
      $g_{A,0}$ and $g_{R,0}$ consistently estimated. Cov(p) stands
      for coverage of a Wald-type 100p\% confidence interval, and
      $\text{sd}(\hat\theta)$ stands for the standard deviation of the
      estimator $\hat\theta$.}
  \label{fig:uni}
\end{figure}

\section{Motivating Application}\label{sec:motiva}
Different types of human breast cancer tumors have been shown to have
heterogeneous response to treatments \cite{perou2000, sotiriou2009}.
Amplification of ERBB2 gene and associated overexpression of human
epidermal growth factor receptor (HER2) encoded by this gene occur in
25-30$\%$ of breast cancers \cite{slamon2001use}. HER2-positive
breast cancer is an aggressive form of the disease and the prognosis
for such patients is generally poor \cite{slamon1987human,
  seshadri1993}. The clinical efficacy of adjuvant trastuzumab, a
recombinant monoclonal antibody, in early stage HER2-positive patients
was demonstrated by several large clinical trials
\cite{perez2011four, romond2005}. We illustrate our methods using
data for 1390 patients from the North Central Cancer Treatment Group
N9831 study, a phase III randomized clinical trial testing the
addition of trastuzumab to chemotherapy in stage I-III HER2-positive
breast cancer. Recruitment started in 2000, and the maximum follow-up
time was 16 years. The treatment group comprised 907 patients. The
trial was subject to right censoring because patients drop out of the
study and because enrollment spanned several years. We adjusted for 12
baseline variables which included demographic variables such as age,
ethnicity, and race; as well as clinical variables such as tumor
grade, nodal status, tumor size, and histology.

We estimated the treatment probabilities as well as the hazard of the
event and censoring using an ensemble predictor known as the super
learner \cite{vanderLaanPolleyHubbard07}, implemented in the R package
SuperLearner. Super learning builds a convex combination of candidate
predictors in a user-given library, where the weights are chosen to
minimize the cross-validated log-likelihood of the resulting
ensemble. We present the results of the ensemble in Table
\ref{tab:sl}, which includes some of the most popular statistical
learning algorithms. The tuning parameters of each algorithm are
chosen using internal cross-validation. In order to fully account for
treatment-covariate interactions, we fitted separate models for the
censoring probabilities in the treated and control arms.

\begin{table}[ht]
  \caption{Super learning ensemble coefficients. RF: random forests, XGB: extreme gradient
    boosting, MLP: multi-layer perceptron, GLM: logistic regression,
    MARS: multivariate adaptive splines, Lasso: $L_1$ regularized
    logistic regression.}
  \label{tab:sl}
  \centering
  \begin{tabular}{rcccccc}\hline
    & RF & XGB & MLP & GLM & MARS & Lasso \\
    \hline
    $g_A$        & 0.00 & 0.47 & 0.15 & 0.00 & 0.09 & 0.28 \\
    $g_R$, $A=1$ & 0.00 & 0.29 & 0.00 & 0.00 & 0.49 & 0.21 \\
    $g_R$, $A=0$ & 0.00 & 0.41 & 0.00 & 0.00 & 0.09 & 0.50 \\
    $h$,  $A=1$  & 0.17 & 0.00 & 0.00 & 0.27 & 0.36 & 0.20 \\
    $h$, $A=0$   & 0.27 & 0.13 & 0.00 & 0.00 & 0.42 & 0.18 \\
    \hline
  \end{tabular}
\end{table}

We computed the $\dtmle$ and $\tmle$ estimators separately for the
treated and untreated groups at time $\tau=12$ years. We obtained an
estimated difference ($A=1$ vs $A=0$) in survival probability of
$\dtmle = 0.107$ (s.e. $0.036$) and $0.098$ (s.e. $0.032$) years in
the treatment arm for each estimator, respectively. The Kaplan-Meier
estimator is equal to $0.044$ (s.e. $0.032$), highlighting the
possible bias due to informative censoring.

\section{Discussion}\label{sec:discuss}

Our method assumes that censoring is confounded with the time to event
only by baseline variables. In the presence of time dependent
confounding between censoring and the event time, our proposal may be
adapted by augmenting the censoring and outcome models to include
time-varying confounders. In these settings. it may be possible to
retain the asymptotic linearity to proper control of the drift term by
generalizing the techniques used to prove our theorems. Such
generalizations are unfortunately not trivial because the
representations of the drift term and therefore its targeting
algorithm varies with the estimating equation considered.

Our methods for doubly robust estimators trivially translate into
$n^{1/2}$-consistency rates for inverse probability weighted
estimators under $n^{1/4}$-consistent estimation of the $g$ components
of the nuisance parameter $\eta$. Specifically, such IPW estimator may
be obtained through our proposal by setting $\hat h(t,a,w)=1$, and
omitting the tilting model (\ref{eq:submodelL}) in the iterative
procedure that defines the proposed estimator.

Most clinical research studies use discrete time scales to measure the
time to event. This is the case of our application and simulation
studies. If time is measured on a continuous scale, implementation of
our methods requires discretization. The specific choice of the
discretization intervals may be guided by what is clinically
relevant. For example, in clinical applications with time to death
outcomes, the clinically relevant scale would typically be a day. In
the absence of clinical criteria to guide the choice of discretization
level, a concern is that too coarse of a discretization may lead to
potentially meaningful information losses. A question for future
research is how to optimally set the level of discretization in order
to trade off information loss versus estimator precision. Another area
for future research is to consider discretization levels that get
finer with sample size.

Existing doubly robust estimators cannot be proved regular or
$n^{1/2}$-consistent in general under inconsistent estimation of one
of the nuisance parameters. While we do not tackle the regularity
problem, we do solve the $n^{1/2}$-consistency problem. This is done
by proving a doubly robust asymptotic linearity result for our
estimator, under the only assumption that at least one of the nuisance
estimators is consistent at $n^{1/4}$-rate. The regularity of our
estimator remains an open problem along with that of all doubly robust
estimators based on data-adaptive estimation of nuisance parameters
under inconsistency of at least one nuisance estimator.

The $n^{1/4}$-rate required by our estimators may still be considered
a restrictive assumption. However, this rate is achievable by several
data-adaptive regression algorithms under certain assumptions on the
true regression functions. See for example
\cite{bickel2009simultaneous} for results on $\ell_1$ regularization,
\cite{wager2015adaptive} for results on regression trees, and
\cite{chen1999improved} for neural networks. This convergence rate is
also achievable by the highly adaptive lasso \cite{benkeser2016highly}
under the mild assumption that the true regression function is
right-hand continuous with left-hand limits and has variation norm
bounded by a constant.


\section{Software}
\label{sec5}

Software in the form of R code, together with a sample input data set
and complete documentation is available at
\texttt{https://github.com/idiazst/survdr}.

\section*{Supplementary Material}
\label{sec6}
\subsection*{Theorem~\ref{lemma:betarep}}

\begin{proof}
  For notational
  simplicity, in this proof we omit the dependence of all functions on
  $w$. E.g., $h_0(t,w)$ is denoted with $h_0(t)$. Lemma~1 in the
  Supplementary materials of \cite{diaz2018targeted} shows
  \begin{equation*}
    P_0 D_{\hat\eta,\theta_0}=\sum_{t=1}^\tau
    E_0\left[-\frac{\hat S(\tau)}{\hat g_A\hat S(t)\hat G(t)}S_0(t-1)\{h_0(t)-\hat
      h(t)\}\left\{g_{A,0} G_0(t)- \hat g_A\hat G(t)\right\}\right].
  \end{equation*}
  First, note that
  \begin{align}
    \{h_0(t) - \hat h(t)\}\{g_{A,0}G_0(t) - \hat g_{A}\hat G(t)\}
    & = \{h_0(t) - h_1(t)\}\{g_{A,0}G_0(t) - \hat g_A\hat G(t)\}\notag\\
    & + \{h_0(t) - \hat h(t)\}\{g_{A,0}G_0(t) - g_{A,1}G_1(t)\}\notag\\
    & + \{h_0(t) - h_1(t)\}\{g_{A,0}G_0(t) - g_{A,1}G_1(t)\}\label{zero}\\
    & + \{h_1(t) - \hat h(t)\}\{g_{A,1}G_1(t) - \hat g_{A}\hat G(t)\}.\label{onhalf}
  \end{align}
  By assumption, the expectation of (\ref{zero}) with respect to $P_0$
  is zero, and the expectation of (\ref{onhalf}) is $o_P(n^{-1/2})$. Define
  \begin{align*}
    \beta_g(\hat g)&=\sum_{t=1}^\tau
                     E_0\left[-\frac{\hat S(\tau)}{\hat g_A\hat S(t)\hat G(t)}S_0(t-1)\{h_0(t)-
                     h_1(t)\}\left\{g_{A,0} G_0(t)- \hat g_A\hat
                     G(t)\right\}\right],\\
    \beta_h(\hat h)&=\sum_{t=1}^\tau
                     E_0\left[-\frac{\hat S(\tau)}{\hat g_A\hat S(t)\hat G(t)}S_0(t-1)\{h_0(t)-
                     \hat h(t)\}\left\{g_{A,0} G_0(t)- g_{A,1}                     G_1(t)\right\}\right]\\
  \end{align*}
  Assume first that $g_1=g_0$.  Denote
  \begin{align*}
    \hat e_{R,0}(k, w) & =
                         E_0\left[\frac{R_k-g_{R,1}(k,W)}{g_{A,1}(W)G_1(k+1,W)}\midd
                         J_k=1, A=1,\hat C_h(k, W) = \hat C_h(k, w)\right]\\
    \hat e_{L,0}(t, w) &=E_0\left[\frac{S_1(\tau)}{S_1(t)}\{L_t-
                         h_1(t)\}\midd
                         I_t=1, A=1, \hat C_g(t, W) =
                         \hat C_g(t, w)\right],
  \end{align*}
  where the expectation is taken with respect to $P_0$ taking
  $\hat C_h$ and $\hat C_g$ as fixed functions. We have
{\small  \begin{align}
    \beta_g(\hat g)
    =&\sum_{t=1}^\tau
       E_0\left[-\frac{S_1(\tau)}{S_1(t)}S_0(t-1)\{h_0(t)-h_1(t)\} \left\{\frac{g_{A,0}}{\hat
       g_A}\frac{G_0(t)}{\hat G(t)}-1\right\}\right]+o_P(n^{-1/2})\notag\\
    =&\sum_{t=1}^\tau
       E_0\left[-\frac{S_1(\tau)}{S_1(t)}AI_t\{L_t-
       h_1(t)\}\left\{\frac{1}{\hat g_A \hat G(t)} - \frac{1}{g_{A,0}
       G_0(t)}\right\}\right]+o_P(n^{-1/2})\notag\\
    =&\sum_{t=1}^\tau
       E_0\left[-AI_t e_{L,0}(t)\left\{\frac{1}{\hat g_A \hat G(t)} - \frac{1}{g_{A,0}
       G_0(t)}\right\}\right]- \sum_{t=1}^\tau E_0\left[\frac{AI_t}{\hat g_A \hat G(t)}
       \{\hat e_{L,0}(t) - e_{L,0}(t)\}\right] +o_P(n^{-1/2})\notag\\
    =&\sum_{t=1}^\tau
       E_0\left[-AI_t e_{L,0}(t)\left\{\frac{1}{\hat g_A \hat G(t)} - \frac{1}{g_{A,0}
       G_0(t)}\right\}\right]+o_P(n^{-1/2})\notag\\
    =&\sum_{t=1}^\tau
       E_0\left[-AV_{t,0}(t-1)U_{t,0}(t) e_{L,0}(t)\left\{ \frac{1}{\hat g_A \hat G(t)} - \frac{1}{g_{A,0}
       G_0(t)}\right\}\right]+o_P(n^{-1/2})\notag\\
    =&\sum_{t=1}^\tau
       E_0\left[-AV_{t,0}(t-1)U_{t,0}(t)
       \frac{e_{L,0}(t)}{g_{A,0}G_0(t)}\left\{\frac{g_{A,0}}{\hat
       g_A\hat G(t)}\{G_0(t)-\hat G(t)\}
       + \frac{1}{\hat g_A}(g_{A,0}-\hat g_A)\right\}\right]+o_P(n^{-1/2})\notag\\
    =&\sum_{t=1}^\tau
       E_0\left[-AV_{t,0}(t-1)U_{t,0}(t)
       \frac{e_{L,0}(t)}{g_{A,0}G_0(t)}\left\{\frac{g_{A,0}}{\hat
       g_A\hat G(t)}\{G_0(t)-\hat G(t)\}\right\}\right]\label{eq:term1}\\
     &- \sum_{t=1}^\tau
       E_0\left[V_{t,0}(t-1)U_{t,0}(t)
       \frac{e_{L,0}(t)}{G_0(t)}\frac{1}{g_{A,0}}(A-\hat g_A)\right]+o_P(n^{-1/2})\label{eq:term2},
  \end{align}}
  where we get
  \[\sum_{t=1}^\tau E_0\left[\frac{AI_t}{\hat g_A \hat G(t)}
      \{\hat e_{L,0}(t) - e_{L,0}(t)\}\right]=0\] using the law of iterated
  expectation. The term (\ref{eq:term2}) is in the desired
  form. It remains to prove the result for (\ref{eq:term1}). Define
  \begin{align*}
    M_0(k, w)& = \{g_{R,0}(k,w)
               - \hat g_R(k,w)\}\frac{G_0(k,w)}{G_0(k+1,w)},\\
    \tilde u_{k,0}(t, w) &= P_0\left[R_t = 1\mid J_t = 1, A = 1, C_g(k, W) =
                           C_g(k,w),M_0(k, W)=M_0(k, w)\right],\\
    \tilde v_{k,0}(t, w) &= P_0\left[L_t = 1\mid I_t = 1, A = 1, C_g(k, W) =
                           C_g(k,w),M_0(k, W)=M_0(k, w)\right],
  \end{align*}
  and notice that
  $P_0\{\tilde v_{k,0}(t) - v_{k,0}(t)\}=O_P(||\hat g_R - g_{R,0}||)$,
  $P_0\{\tilde u_{k,0}(t) - u_{k,0}(t)\}=O_P(||\hat g_R - g_{R,0}||)$.
  Then (\ref{eq:term1}) is equal to
{\footnotesize  \begin{align*}
    \sum_{t=1}^\tau
    E_0&\left[-AV_{t,0}(t-1)U_{t,0}(t)
         \frac{e_{L,0}(t)}{g_{A,0}G_0(t)}\left\{\frac{g_{A,0}}{\hat
         g_A\hat G(t)}\{G_0(t)-\hat G(t)\}\right\}\right]\\
       &= \sum_{t=1}^\tau E_0\left[-AV_{t,0}(t-1)U_{t,0}(t)
         \frac{e_{L,0}(t)}{g_{A,0}G_0(t)}\left\{\frac{g_{A,0}}{\hat
         g_A\hat G(t)}\sum_{k=0}^{t-1}G_0(k)\{g_{R,0}(k)
         - \hat g_R(k)\}\frac{\hat G(t)}{\hat G(k+1)}\right\}\right]\\
       &= \sum_{t=1}^\tau\sum_{k=0}^{t-1}E_0\left[-\frac{A}{g_{A,0}}V_{t,0}(t-1)U_{t,0}(t)
         \frac{e_{L,0}(t)}{G_0(t)}\left\{\{g_{R,0}(k)
         - \hat g_R(k)\}\frac{G_0(k)}{G_0(k+1)}\right\}\right]\\
       &= \sum_{t=1}^\tau\sum_{k=0}^{t-1}E_0\left[-\frac{A}{g_{A,0}}V_{t,0}(t-1)U_{t,0}(t)
         \frac{e_{L,0}(t)}{G_0(t)}\frac{1-R_0}{1-\tilde u_{t,0}(0)}\left\{\{g_{R,0}(k)
         - \hat g_R(k)\}\frac{G_0(k)}{G_0(k+1)}\right\}\right]\\
       &= \sum_{t=1}^\tau\sum_{k=0}^{t-1}E_0\left[-\frac{A}{g_{A,0}}V_{t,0}(t-1)U_{t,0}(t)
         \frac{e_{L,0}(t)}{G_0(t)}\frac{1-R_0}{1-u_{t,0}(0)}\left\{\{g_{R,0}(k)
         - \hat g_R(k)\}\frac{G_0(k)}{G_0(k+1)}\right\}\right] +
         O_P(||\hat g_R - g_{R,0}||^2)\\
       &=\sum_{t=1}^\tau\sum_{k=0}^{t-1}E_0\left[-\frac{A}{g_{A,0}}V_{t,0}(t-1)U_{t,0}(t)
         \frac{e_{L,0}(t)}{G_0(t)}\frac{1-R_0}{1-u_{t,0}(0)}\left\{\{g_{R,0}(k)
         - \hat g_R(k)\}\frac{G_0(k)}{G_0(k+1)}\right\}\right] +
         o_P(n^{-1/2})\\
       &= \sum_{t=1}^\tau\sum_{k=0}^{t-1}E_0\left[-\frac{A}{g_{A,0}}V_{t,0}(t-1)U_{t,0}(t)
         \frac{e_{L,0}(t)}{G_0(t)}\frac{1-R_0}{1-u_{t,0}(0)}\frac{1-L_1}{1-v_{t,0}(1)}
         \left\{\{g_{R,0}(k)
         - \hat g_R(k)\}\frac{G_0(k)}{G_0(k+1)}\right\}\right] +
         o_P(n^{-1/2})\\
       &=
         \sum_{t=1}^\tau\sum_{k=0}^{t-1}E_0\left[-\frac{A}{g_{A,0}}\frac{V_{t,0}(t-1)}{V_{t,0}(k)}
         \frac{U_{t,0}(t)}{U_{t,0}(k)}\frac{e_{L,0}(t)}{G_0(t)}J_k
         \left\{\{g_{R,0}(k)
         - \hat g_R(k)\}\frac{G_0(k)}{G_0(k+1)}\right\}\right] + o_P(n^{-1/2})\\
       &=
         \sum_{t=1}^\tau\sum_{k=0}^{t-1}E_0\left[-\frac{A}{g_{A,0}}\frac{V_{t,0}(t-1)}{V_{t,0}(k)}
         \frac{U_{t,0}(t)}{U_{t,0}(k)}\frac{e_{L,0}(t)}{G_0(t)}J_k
         \left\{\{R_k
         - \hat g_R(k)\}\frac{G_0(k)}{G_0(k+1)}\right\}\right] +
         o_P(n^{-1/2})\\
       &=
         \sum_{k=0}^{\tau-1}\sum_{t=k+1}^{\tau}E_0\left[-\frac{A}{g_{A,0}}\frac{V_{t,0}(t-1)}{V_{t,0}(k)}
         \frac{U_{t,0}(t)}{U_{t,0}(k)}\frac{e_{L,0}(t)}{G_0(t)}J_k
         \left\{\{R_k
         - \hat g_R(k)\}\frac{G_0(k)}{G_0(k+1)}\right\}\right] + o_P(n^{-1/2}),
  \end{align*}}
  where the first equality follows from Lemma~\ref{lemma:telescope}.

  Assume now $h_1=h_0$. Then
  \begin{align}
    \beta_h(\hat h)  =&\sum_{t=1}^\tau E_0\left[-\frac{\hat S(\tau)}{\hat
                        S(t)}S_0(t-1)\{h_0(t)-\hat h(t)\}\left\{\frac{g_{A,0}}{g_{A,1}}\sum_{k=0}^{t-1}\{g_{R,0}(k) - g_{R,1}(k)\}\frac{G_0(k)}{G_1(k+1)}\right\}\right]\label{betaR}\\
    +&\sum_{t=1}^\tau
       E_0\left[-\frac{\hat S(\tau)}{\hat S(t)}S_0(t-1)\{h_0(t)-\hat h(t)\}\frac{1}{g_{A,1}}(g_{A,0}-g_{A,1})\right].\label{betaA}
  \end{align}
  We first tackle the term in (\ref{betaR}). This term is equal to
  \[E_0\left[\frac{g_{A,0}}{g_{A,1}}\sum_{k=0}^{\tau-1}\{g_{R,0}(k) - g_{R,1}(k)\}\frac{G_0(k)}{G_1(k+1)}\sum_{t=k+1}^\tau \frac{\hat S(\tau)}{\hat
        S(t)}S_0(t-1)\{\hat h(t)- h_0(t)\}\right].\]
  We have
  \begin{align*}
    \sum_{t=k+1}^\tau \frac{\hat S(\tau)}{\hat
    S(t)}S_0(t-1)\{\hat h(t)-\hat h_0(t)\}&=\sum_{t=1}^\tau \frac{\hat S(\tau)}{\hat
                                            S(t)}S_0(t-1)\{\hat h(t)- h_0(t)\}
                                            - \sum_{t=1}^k\frac{\hat S(\tau)}{\hat
                                            S(t)}S_0(t-1)\{\hat
                                            h(t)-h_0(t)\}\\
                                          &=S_0(\tau) - \hat S(\tau) -\frac{\hat S(\tau)}{\hat S(k)}\{S_0(k) - \hat
                                            S(k)\}\\
                                          &=S_0(k)\left\{\frac{\hat S(\tau)}{\hat S(k)}-\frac{S_0(\tau)}{S_0(k)}\right\},\\
  \end{align*}
  where the second equality follows from
  Lemma~\ref{lemma:telescope}. Thus, (\ref{betaR}) equals
  \begin{align*}
    E_0&\left[\sum_{k=0}^{\tau-1}\left\{\frac{g_{A,0}}{g_{A,1}}G_0(k)S_0(k)\{g_{R,0}(k) - g_{R,1}(k)\}\frac{1}{G_1(k+1)}\right\}\left\{\frac{\hat S(\tau)}{\hat S(k)}-\frac{S_0(\tau)}{S_0(k)}\right\}\right]\\
       &=E_0\left[\sum_{k=0}^{\tau-1}\left\{\frac{A
         J_k}{g_{A,1}}\{R_k -
         g_{R,1}(k)\}\frac{1}{G_1(k+1)}\right\}\left\{\frac{\hat
         S(\tau)}{\hat
         S(k)}-\frac{S_0(\tau)}{S_0(k)}\right\}\right]\\
       &=E_0\left[\sum_{k=0}^{\tau-1}A J_k
         e_{R,0}(k)\left\{\frac{\hat S(\tau)}{\hat
         S(k)}-\frac{S_0(\tau)}{S_0(k)}\right\}\right] +
         E_0\left[\sum_{k=0}^{\tau-1}A J_k\frac{\hat S(\tau)}{\hat
         S(k)}\{\hat e_{R,0}(k) -
         e_{R,0}(k)\}\right] \\
       &=E_0\left[\sum_{k=0}^{\tau-1}A \one\{\bar R_{k-1}=0,\bar L_k=0\}
         e_{R,0}(k)\left\{\frac{\hat S(\tau)}{\hat
         S(k)}-\frac{S_0(\tau)}{S_0(k)}\right\}\right]\\
       &=E_0\left[\sum_{k=0}^{\tau-1}A \one\{\bar R_{k-1}=0,\bar L_{k-1}=0\}(1-L_k)
         e_{R,0}(k)\left\{\frac{\hat S(\tau)}{\hat
         S(k)}-\frac{S_0(\tau)}{S_0(k)}\right\}\right]\\
       &=E_0\left[\sum_{k=0}^{\tau-1}A \one\{\bar R_{k-1}=0,\bar L_{k-1}=0\}\{1-b_{k,0}(k)\}
         e_{R,0}(k)\left\{\frac{\hat S(\tau)}{\hat
         S(k)}-\frac{S_0(\tau)}{S_0(k)}\right\}\right]\\
       &=E_0\left[\sum_{k=0}^{\tau-1}A \one\{\bar R_{k-2}=0,\bar L_{k-1}=0\}(1-R_{k-1})\{1-b_{k,0}(k)\}
         e_{R,0}(k)\left\{\frac{\hat S(\tau)}{\hat
         S(k)}-\frac{S_0(\tau)}{S_0(k)}\right\}\right]\\
       &=E_0\left[\sum_{k=0}^{\tau-1}A \one\{\bar R_{k-2}=0,\bar L_{k-1}=0\}\{1-d_{k,0}(k-1)\}\{1-b_{k,0}(k)\}
         e_{R,0}(k)\left\{\frac{\hat S(\tau)}{\hat
         S(k)}-\frac{S_0(\tau)}{S_0(k)}\right\}\right]\\
       &=E_0\left[\sum_{k=0}^{\tau-1}A \one\{\bar R_{k-2}=0,\bar
         L_{k-1}=0\}\frac{D_{k,0}(k)}{D_{k,0}(k-1)}\frac{B_{k,0}(k)}{B_{k,0}(k-1)}e_{R,0}(k)\left\{\frac{\hat S(\tau)}{\hat S(k)}-\frac{S_0(\tau)}{S_0(k)}\right\}\right]\\
       &=E_0\left[\sum_{k=0}^{\tau-1}A \one\{\bar R_{k-3}=0,\bar
         L_{k-2}=0\}\frac{D_{k,0}(k)}{D_{k,0}(k-2)}\frac{B_{k,0}(k)}{B_{k,0}(k-2)}e_{R,0}(k)\left\{\frac{\hat
         S(\tau)}{\hat S(k)}-\frac{S_0(\tau)}{S_0(k)}\right\}\right]\\
       &\,\,\,\,\vdots\\
       &=E_0\left[\sum_{k=0}^{\tau-1}A \frac{B_{k,0}(k)}{S_0(k)}D_{k,0}(k)
         e_{R,0}(k)S_0(k)\left\{\frac{\hat S(\tau)}{\hat
         S(k)}-\frac{S_0(\tau)}{S_0(k)}\right\}\right]\\
       &=E_0\left[\sum_{t=1}^\tau -A\frac{\hat S(\tau)}{\hat
         S(t)}S_0(t-1)\{h_0(t)-\hat h(t)\}\left\{\sum_{k=0}^{t-1}\frac{B_{k,0}(k)}{S_0(k)}D_{k,0}(k)
         e_{R,0}(k)\right\}\right]\\
       &=E_0\left[\sum_{t=1}^\tau\sum_{k=0}^{t-1}
         -A\frac{1-R_0}{1-d_{k,0}(0)}S_0(t-1)\{h_0(t)-\hat
         h(t)\}\frac{\hat S(\tau)}{\hat
         S(t)}\frac{B_{k,0}(k)}{S_0(k)}D_{k,0}(k)
         e_{R,0}(k)\right]\\
       &=E_0\left[\sum_{t=1}^\tau\sum_{k=0}^{t-1}
         -A\frac{1-R_0}{1-d_{k,0}(0)}\frac{1-L_1}{1-b_{k,0}(1)}S_0(t-1)\{h_0(t)-\hat
         h(t)\}\frac{\hat S(\tau)}{\hat
         S(t)}\frac{B_{k,0}(k)}{S_0(k)}D_{k,0}(k)
         e_{R,0}(k)\right]\\
       &=E_0\left[\sum_{t=1}^\tau\sum_{k=0}^{t-1}
         -A\frac{I_t}{D_{k,0}(t)B_{k,0}(t-1)}S_0(t-1)\{h_0(t)-\hat
         h(t)\}\frac{\hat S(\tau)}{\hat
         S(t)}\frac{B_{k,0}(k)}{S_0(k)}D_{k,0}(k)
         e_{R,0}(k)\right]\\
       &=E_0\left[\sum_{t=1}^\tau
         -AI_t\sum_{k=0}^{t-1}\left\{\frac{S_0(t-1)}{B_{k,0}(t-1)}\frac{S_0(\tau)}{
         S_0(t)}\frac{B_{k,0}(k)}{S_{0}(k)}\frac{D_{k,0}(k)}{S_{k,0}(t)}
         e_{R,0}(k)\right\}\{L_t-\hat
         h(t)\}\right] .
  \end{align*}
  Similar arguments can be used to show that (\ref{betaA}) equals
  \[-E_0\left[\frac{e_{A,0}}{q_0}A\sum_{t=1}^\tau\frac{S_0(\tau)}{S_0(t)}\frac{S_0(t-1)}{B_{t,0}(t-1)}\frac{I_t}{D_{t,0}(t)}\{L_t
      - \hat h(t)\}\right],\]
  concluding the proof of the theorem.
\end{proof}
\subsection*{Theorem~\ref{theo:dr}}
This result follows from (\ref{eq:wh}) in the main document and
Lemma~\ref{lemma:betaaslin} below.

\subsection*{Other results}


\begin{lemma}\label{lemma:betaaslin}
  Assume \ref{ass:donsker} and \ref{ass:DR2}. Then we have
  \[\beta(\hat\eta)=-(\Pn-P_0)\{D_{A,g_1} + D_{R,g_1} +
    D_{L,h_1}\}+o_P(n^{-1/2}).\]
\end{lemma}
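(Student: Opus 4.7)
The plan is to chain Theorem~\ref{lemma:betarep} with a standard empirical process argument, and then to invoke the TMLE score equations to eliminate the remaining plug-in term. Applying Theorem~\ref{lemma:betarep} gives
\[\beta(\hat\eta) = P_0\{D_{A,\hat g} + D_{R,\hat g} + D_{L,\hat h}\} + o_P(n^{-1/2}),\]
after which the identity $P_0 f = \Pn f - (\Pn-P_0)f$ rewrites each summand as $\Pn D_{\cdot,\hat\cdot} - (\Pn - P_0) D_{\cdot,\hat\cdot}$. The task reduces to handling these two kinds of terms.

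Second, I would invoke Donsker condition \ref{ass:donsker} together with the $L_2(P_0)$-consistency from \ref{ass:DR2} to apply the standard equicontinuity theorem for empirical processes indexed by a Donsker class (e.g., Lemma~19.24 of \cite{vanderVaart98}). Because the auxiliary covariates $H_A, H_R, H_L$ in (\ref{eq:defH}) are fixed bounded functions of the truth $\eta_0$ and $\lambda_0$, the classes $\{D_{A,g}:||g-g_1||<\delta\}$, $\{D_{R,g}:||g-g_1||<\delta\}$, and $\{D_{L,h}:||h-h_1||<\delta\}$ inherit the Donsker property from \ref{ass:donsker} via standard preservation results for bounded multiplication and Lipschitz transformations (e.g., Theorems~2.10.6 and 2.10.20 of \cite{vanderVaartWellner96}). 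Equicontinuity then yields
\[(\Pn - P_0)D_{L,\hat h} = (\Pn - P_0)D_{L,h_1} + o_P(n^{-1/2}),\]
and analogously for the $A$ and $R$ components.

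Third, I would argue that $\Pn\{D_{A,\hat g} + D_{R,\hat g} + D_{L,\hat h}\} = o_P(n^{-1/2})$ as a direct consequence of the TMLE tilting construction. Differentiating at $\epsilon=0$ the loglikelihood losses associated to the logistic submodels (\ref{eq:submodelA})--(\ref{eq:submodelL}) shows that $D_{A,\hat g}$, $D_{R,\hat g}$, and $D_{L,\hat h}$ are precisely the scores produced by these submodels. Since the iterative algorithm of Section~\ref{sec:tmle} is terminated once $\max\{|\hat\epsilon_A|,|\hat\epsilon_R|,|\hat\epsilon_L|\} < 10^{-4}n^{-3/5}$, a Taylor expansion of the corresponding normal equations delivers the claimed rate. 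Assembling the three pieces yields
\[\beta(\hat\eta) = -(\Pn - P_0)\{D_{A,g_1} + D_{R,g_1} + D_{L,h_1}\} + o_P(n^{-1/2}),\]
as required.

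The main obstacle, as in the analogous cross-sectional results of \cite{van2014targeted,benkeser2016doubly,diaz2017doubly}, is verifying that the Donsker property in \ref{ass:donsker}, which is formulated only for $g_A, g_R, h$, lifts to the full score classes that involve the $H$ covariates and the at-risk indicator processes $I_t, J_t$. This is technical but tractable because $H_A, H_R, H_L$ are uniformly bounded under the positivity assumption \ref{ass:positivity} and enter the scores linearly, so the relevant envelope conditions are easy to verify. A secondary subtlety is that in practice the $H$ covariates must themselves be estimated by $\hat H$ built from $\hat\lambda$, and the resulting bias is what assumption \ref{ass:ass4} is designed to control; this explains why \ref{ass:ass4} is invoked in Theorem~\ref{theo:dr} even though the present lemma as stated does not require it explicitly.
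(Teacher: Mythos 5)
Your skeleton matches the paper's: start from Theorem~\ref{lemma:betarep}, split $P_0 D_{\cdot,\hat\cdot}$ into $\Pn D_{\cdot,\hat\cdot} - (\Pn-P_0)D_{\cdot,\hat\cdot}$, dispose of the centered empirical-process term by Donsker equicontinuity, and dispose of the empirical mean via the TMLE score equations. The gap is in your third step. The logistic submodels (\ref{eq:submodelA})--(\ref{eq:submodelL}) are built from the \emph{estimated} covariates $\hat H_A,\hat H_R,\hat H_L$, so the scores they generate are $\hat D_{A,\hat g},\hat D_{R,\hat g},\hat D_{L,\hat h}$ (the versions with $\hat H$ in place of $H$), and the tilting construction delivers $\Pn\hat D_{R,\hat g}=0$, not $\Pn D_{R,\hat g}=o_P(n^{-1/2})$. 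Bridging the two is the substantive content of the paper's proof: it writes $P_0 D_{R,\hat g}=-(\Pn-P_0)\hat D_{R,\hat g}+P_0(D_{R,\hat g}-\hat D_{R,\hat g})$, bounds the second term by Cauchy--Schwarz as $O_P(\|\hat g_R-g_{R,0}\|\,\|\hat H_R-H_R\|)$, and then decomposes $\|\hat H_R-H_R\|$ (via Lemma~\ref{lemma:telescope} and the triangle inequality) into a piece driven by $\|\hat C_g-C_g\|=O_P(\|\hat g-g_0\|)$ and a piece controlled by the smoothing rate in \ref{ass:ass4}. So \ref{ass:ass4} is not a ``secondary subtlety'' deferrable to Theorem~\ref{theo:dr}: the paper invokes it inside this very proof (it is declared in force throughout the manuscript), and without it the product of rates cannot be shown to be $o_P(n^{-1/2})$.

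The second missing ingredient is the doubly robust case analysis. Condition \ref{ass:DR2} only gives $n^{1/4}$-rates toward the limits $(g_1,h_1)$, at most one of which may differ from the truth. The Cauchy--Schwarz product above is useful only when $g_1=g_0$; if instead only $h_1=h_0$ holds, $\|\hat g_R-g_{R,0}\|$ need not vanish and that bound is useless. The paper escapes by observing that in that case $e_{L,0}\equiv 0$, hence $H_R\equiv 0$ and $D_{R,\hat g}=D_{R,g_1}=0$ identically, so the $R$ (and $A$) components are trivial; symmetrically, the $L$ component is nontrivial only when $h_1=h_0$, in which case $\|\hat h-h_0\|$ does converge at the required rate. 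Your proposal never branches on which nuisance limit equals the truth, so as written it cannot cover all cases permitted by \ref{ass:DR1}--\ref{ass:DR2}. The equicontinuity step and the Donsker preservation remarks in your second and fourth paragraphs are fine and agree with the paper's use of example 2.10.10 of van der Vaart and Wellner and Theorem 19.24 of van der Vaart.
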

\begin{proof}
  From Theorem~\ref{lemma:betarep} we have
  $\beta(\hat \eta)=P_0\{D_{A,\hat g} + D_{R,\hat g} + D_{L,\hat h}\}
  + o_P(n^{-1/2})$. We will show that
  \begin{equation}
    P_0D_{R,\hat g}=-(\Pn-P_0)D_{R,g_1} + o_P(n^{-1/2}).\label{eq:asR}
  \end{equation} The proof
  for the other components of $\beta(\hat\eta)$ follows analogous
  steps. Assume first that $g_1=g_0$. Denote
  \[\hat D_{R, \hat g}(o) = -\sum_{k=0}^{\tau-1}a\,j_k \hat H_R(k, w)
    \{r_k - \hat g_R(k,w)\}.\] Note that, by construction,
  $\Pn \hat D_{R, \hat g}=0$. Thus we have
  \[P_0D_{R,\hat g} = -(\Pn - P_0)\hat D_{R,\hat g} + P_0(D_{R,\hat g}
    - \hat D_{R,\hat g}),\] where we added and subtracted
  $P_0\hat D_{R,\hat g}$. We have
  \[P_0(D_{R,\hat g} - \hat D_{R,\hat g}) = \int
    \sum_{k=0}^{\tau-1}a\,j_k \{\hat H_R(k, w) -
    H_R(k,w)\}\{g_{R,0}(k,w) - \hat g_R(k, w)\}dP_0(o).\] The
  Cauchy-Schwartz inequality shows
  \[P_0(D_{R,\hat g} - \hat D_{R,\hat g}) = O_P\left(||\hat g_R -
      g_{R,0}||\,||\hat H_R - H_R||\right).\] We will now argue that
  $||\hat H_R - H_R||$ may be decomposed as a sum of two terms: one
  exclusively related to estimation of $g_0$, and one exclusively
  related to the smoothing method used to obtain $\hat\lambda$. Lemma
  \ref{lemma:telescope} along with the Cauchy-Schwartz inequality and
  the definition of $H_R$ show that
  \[||\hat H_R - H_R||=O_P\left(||\hat v_{k} - v_{k,0}|| + ||\hat
      u_{k} - u_{k,0}|| + ||\hat g_R - g_{R,0}|| + ||\hat e_L -
      e_{L,0}||\right)\] Recall the definition of $\hat v_{k,0}$,
  $\hat u_{k,0}$, and $\hat e_{L,0}$ as the corresponding true
  expectations conditional on the estimated covariate $\hat C_g$. The
  triangle inequality shows
  \[||\hat v_{k} - v_{k,0}||\leq ||\hat v_{k,0}-v_{k,0}||+||\hat v_k -\hat
    v_{k,0}||,\] where the fist term in the right hand side
  converges as $||\hat C_g- C_g||$, and the second term is assumed
  $o_P(n^{-1/4})$ (\ref{ass:ass4}). Analogous inequalities hold for
  $\hat u_{k,0}$ and $\hat e_L$. Since
  $||\hat C_g- C_g||=O_P(||\hat g_R - g_{R,0}||+||\hat g_A -
  g_{A,0}||)$, we get
  \[P_0(D_{R,\hat g} - \hat D_{R,\hat g}) = O_P\left(||\hat g_R -
      g_{R,0}||\{||\hat g_R - g_{R,0}||+||\hat g_A - g_{A,0}|| +
      o_P(n^{-1/4})\}\right).\] Under condition~\ref{ass:DR2} this term is
  $o_P(n^{-1/2})$. Under \ref{ass:donsker} and \ref{ass:DR2}, example
  2.10.10 of \cite{vanderVaartWellner96} yields that
  $\hat D_{R,\hat g}$ is in a Donsker class. Thus, according to
  theorem 19.24 of \cite{vanderVaart98}:
  \[P_0D_{R,\hat g} = -(\Pn-P_0)D_{R,g_0} + o_P(n^{-1/2}).\] If
  $h_1=h_0$, then $e_L(t,w)=0$, which implies $H_R(t,w)=0$. Thus,
  $D_{R,\hat g}(o)=D_{R,g_1}(o)=0$, and (\ref{eq:asR}) follows trivially,
  concluding the proof of the lemma.

\end{proof}
\begin{lemma}\label{lemma:telescope}
  For two sequences $a_1,\ldots,a_m$ and $b_1,\ldots,b_m$ such that
  $a_t\neq 1$ and $b_t\neq 1$, we have
  \[\prod_{t=1}^{m}(1-a_t) - \prod_{t=1}^{m}(1-b_t) = \sum_{t=1}^{m}\left\{\prod_{k=1}^{t-1}(1-a_k)(b_t-a_t)\prod_{k=t+1}^{m}(1-b_k)\right\}.\]
\end{lemma}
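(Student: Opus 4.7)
The plan is to prove this as a standard telescoping identity by interpolating between the two products one factor at a time. The conditions $a_t\neq 1$ and $b_t\neq 1$ are not actually needed for the algebraic manipulation (they presumably appear because the lemma is used in contexts where certain ratios are formed), so I will proceed by pure polynomial algebra.

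First I would introduce the interpolating sequence
\[
P_t \;=\; \prod_{k=1}^{t-1}(1-a_k)\,\prod_{k=t}^{m}(1-b_k), \qquad t=1,\ldots,m+1,
\]
with the usual convention that empty products equal $1$. The two boundary values recover exactly the quantities of interest: $P_1=\prod_{k=1}^{m}(1-b_k)$ and $P_{m+1}=\prod_{k=1}^{m}(1-a_k)$. Hence the left-hand side of the lemma equals the telescoping sum
\[
\prod_{t=1}^{m}(1-a_t)-\prod_{t=1}^{m}(1-b_t) \;=\; P_{m+1}-P_1 \;=\; \sum_{t=1}^{m}(P_{t+1}-P_t).
\]

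Second, I would compute each summand by factoring out the terms common to $P_{t+1}$ and $P_t$. Writing
\[
P_{t+1} \;=\; \prod_{k=1}^{t-1}(1-a_k)\,(1-a_t)\,\prod_{k=t+1}^{m}(1-b_k), \qquad P_t \;=\; \prod_{k=1}^{t-1}(1-a_k)\,(1-b_t)\,\prod_{k=t+1}^{m}(1-b_k),
\]
gives
\[
P_{t+1}-P_t \;=\; \prod_{k=1}^{t-1}(1-a_k)\,\bigl[(1-a_t)-(1-b_t)\bigr]\,\prod_{k=t+1}^{m}(1-b_k) \;=\; \prod_{k=1}^{t-1}(1-a_k)(b_t-a_t)\prod_{k=t+1}^{m}(1-b_k).
\]
Substituting back into the telescoping sum yields the stated identity.

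There is no genuine obstacle here: the only small point to watch is that the indexing of the interpolating sequence aligns correctly with the boundary cases ($t=1$ uses an empty $a$-product and $t=m$ uses an empty $b$-product on the right), which falls out from the empty-product convention. Because the argument is purely algebraic, no assumption on the $a_t$ or $b_t$ beyond their being in a commutative ring is required; the non-equality assumptions in the statement are harmless and simply not invoked.
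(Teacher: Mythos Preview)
Your proof is correct and is essentially the same telescoping argument as the paper's: the paper simply writes $(b_t-a_t)=(1-a_t)-(1-b_t)$ on the right-hand side and observes that the resulting sum telescopes, which is exactly your computation of $P_{t+1}-P_t$ read in the opposite direction. Your observation that the hypotheses $a_t\neq 1$, $b_t\neq 1$ are not needed for the identity itself is also accurate.
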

\begin{proof}
  Replace $(b_t-a_t)$ by $(1 - a_t) - (1 - b_t)$ in the right hand
  side and expand the sum to notice it is a telescoping sum.
\end{proof}



\bibliographystyle{plainnat}
\bibliography{tmle}
\end{document}